\newtheorem{lemma}{Lemma}
\newtheorem{theorem}{Theorem}
\newcommand{\distancealg}{\texttt{GLB-Nearest}\xspace}
\newcommand{\costalg}{\texttt{GLB-Energy}\xspace}
\newcommand{\carbonalg}{\texttt{GLB-Carbon}\xspace}
\newcommand{\wateralg}{\texttt{GLB-Water}\xspace}
\newcommand{\costcarbonalg}{\texttt{GLB-C2}\xspace}
\newcommand{\allalg}{\texttt{GLB-All}\xspace}
\newcommand{\ouralg}{\texttt{eGLB}\xspace}
\newcommand{\ouralgonline}{\texttt{eGLB}\xspace}
\newcommand{\ouralgoffline}{\texttt{eGLB-Off}\xspace}
\newcommand{\ouralgmpc}{\texttt{eGLB-MPC}\xspace}
\newcommand{\cX}{\mathcal{X}}
\newcommand{\cZ}{\mathcal{Z}}
\newcommand{\cN}{\mathcal{N}}
\newcommand{\cC}{\mathcal{C}}
\newcommand{\cW}{\mathcal{W}}
\newcommand{\cH}{\mathcal{H}}
\newcommand{\RR}{\mathbb{R}}
\begin{document}

\title{Towards Environmentally Equitable AI \\via Geographical Load Balancing}

\author{Pengfei Li}
\email{pli081@ucr.edu}
\affiliation{%
  \institution{University of California, Riverside}
  \country{}
  \city{}
}

\author{Jianyi Yang}
\email{jyang239@ucr.edu}
\affiliation{%
	\institution{University of California, Riverside}
 \country{}
 \city{}
}

\author{Adam Wierman}
\email{adamw@caltech.edu}
\affiliation{%
\institution{California Institute of Technology}
 \country{}
 \city{}
}

\author{Shaolei Ren}
\email{shaolei@ucr.edu}
\affiliation{%
	\institution{University of California, Riverside}
 \country{}
 \city{}
}

\begin{abstract}
Fueled by the soaring popularity of 
foundation models, the accelerated growth of 
artificial intelligence (AI) models' enormous environmental
footprint has come
under increased scrutiny. 
While many approaches have been proposed
to make AI more energy-efficient and environmentally friendly,
 environmental inequity --- the fact that AI's environmental footprint can be disproportionately higher in certain regions than in
others --- has emerged, raising social-ecological 
justice concerns.
This paper takes a first step toward addressing AI's
environmental inequity by 
fairly balancing its regional environmental impact. 
Concretely,
we focus on the carbon and water footprints of AI model 
inference and propose equity-aware 
geographical load balancing (\ouralg) to explicitly 
minimize AI's highest environmental cost across all
the regions.
The consideration of environmental
equity creates substantial algorithmic challenges as the optimal GLB decisions
require complete offline information that is lacking practice.
To address the challenges, we introduce auxiliary variables
and optimize GLB decisions online based on dual
mirror descent.
In addition to analyzing the performance of \ouralg theoretically,
we run trace-based empirical simulations by considering a set of geographically distributed data centers that serve inference requests for a large language AI model. The results demonstrate
that existing GLB approaches may
amplify environmental inequity while 
\ouralg can significantly reduce
the regional disparity in terms of carbon and water footprints. 
%\emph{Source code}: 
%\url{https://github.com/Ren-Research/Environmentally-Equitable-AI}
\end{abstract}

\maketitle

\setcounter{section}{0}

\section{Introduction}

The success of artificial intelligence (AI) relies heavily on
 computationally intensive calculations
 to learn useful information from data during training and
provide insightful predictions during inference. As such, AI models,
 especially large generative models like
GPT-3 \cite{ML_GPT3_Energy_Others_NIPS_2020_NEURIPS2020_1457c0d6},
are typically trained
on large clusters of power-hungry servers that may each have multiple graphic processing
units (GPUs) and
are housed in warehouse-scale data centers.
Moreover, for inference, AI models are often deployed in geographically distributed data centers to serve users
with low transmission latency.

Consequently, the exponentially growing demand for AI
has created an enormous appetite for energy
as well as a negative impact on the environment
\cite{ML_CarbonFoorptint_Google_JeffDean_Journal_2022_9810097,GreenAI_Washington_ACM_2020_10.1145/3381831,GreenAI_EnergyPolicy_NLP_UMass_ACL_2019_strubell-etal-2019-energy,GreenAI_ReportingEnergyCarboon_Stanford_JMLR_2020_10.5555/3455716.3455964,ML_Carbon_Bloom_176B_Sasha_Luccioni_arXiv_2022_luccioni2022estimating,ML_GPT3_Energy_Others_NIPS_2020_NEURIPS2020_1457c0d6}.
For example, putting aside the environmental toll
of chip manufacturing (e.g., raw material
extraction and toxic chemicals) \cite{Carbon_JunkyardComputing_Smartphone_UCSD_ASPLOS_2023_10.1145/3575693.3575710,OECD_MeasuringEnvronmentalImpacts_AI_2022_/content/paper/7babf571-en,Carbon_LCA_ACT_ArchitecturalModeling_HarvardFacebook_ISCA_2022_10.1145/3470496.3527408} and the noise pollution of running AI servers \cite{DataCenter_EnvironmentalImpacts_CarbonWaterNoise_MIT_News_2022},
training a large language model like GPT-3 and LaMDA can easily
consume hundreds of megawatt-hour of electricity,
generate many tonnes of carbon emissions,
and evaporate hundreds of thousands of liters of clean freshwater
for cooling \cite{ML_Carbon_Bloom_176B_Sasha_Luccioni_arXiv_2022_luccioni2022estimating,ML_LaMDA_Lanugage_Google_arXiv_2022_thoppilan2022lamda,Shaolei_Water_AI_Thirsty_arXiv_2023_li2023making}. Crucially, in addition to their impacts on the global climate, AI's environmental footprint also has
significant local and regional impacts. Elevated carbon emissions
have localized social costs \cite{Carbon_LocalCarbonPolicy_Economic_NBER_2022_NBERw30027}
and may increase local ozone,  particulate matter, and premature 
mortality \cite{Carbon_LocalImpact_Stanford_2010_doi:10.1021/es903018m};
electricity generation, especially when burning fuels,
produces \emph{local} air pollutants, discharges
pollution such as thermal pollution into water bodies,
and generates solid wastes (possibly including hazardous wastes) \cite{US_EPA_Electricity_LocalImpact};
and staggering water consumption, both directly for on-site cooling
and indirectly for off-site electricity generation, can further stress the already-limited local freshwater resources
and even worsen extended megadroughts in regions like Arizona \cite{Shaolei_Water_AI_Thirsty_arXiv_2023_li2023making,Water_ElectricityGeneration_US_2003_torcellini2003consumptive}.

Fueled by the soaring popularity of large language
and foundation models, the accelerated growth of AI's environmental
footprint has come
under increased scrutiny recently \cite{AI_SustainableAI_Regulation_PrivacyLawScholarsConf_2023,Carbon_SustainbleAI_CaroleWu_MLSys_2022_wu2022sustainable}. To make AI more energy-efficient
and environmentally friendly, research studies have pursued a variety of approaches, including
 computationally efficient
training and inference \cite{DNN_DeepSpeed_YuxiongHe_MSR_ICML_2022_rajbhandari2022deepspeed,ML_FrugalGPT_Cost_JamesZou_Stanford_arXiv_2023_chen2023frugalgpt},
energy-efficient GPU and accelerator designs \cite{DNN_AutoDNNChip_FPGA_ASIC_YingyanLin_Rice_DemingChen_UIUC_FPGA_2020_10.1145/3373087.3375306,DNN_NAS_AcceleratorAware_AutoML_SimulationPredictor_gupta2020accelerator,ML_CarbonFoorptint_Google_JeffDean_Journal_2022_9810097},
carbon-aware task scheduling \cite{GreenAI_ReportingEnergyCarboon_Stanford_JMLR_2020_10.5555/3455716.3455964,Carbon_SustainbleAI_CaroleWu_MLSys_2022_wu2022sustainable},
green cloud infrastructures \cite{Carbon_MetricsSustainability_AnshulGandhi_StonyBrook_HotCarbon_2022,Carbon_SustainableClouds_VirtualizingEnergy_DavidIrwin_AdamWierman_SoCC_2021_10.1145/3472883.3487009,Carbon_CarbonExplorer_HolisticGreenDataCenter_CaroleWu_BenjaminLee_ASPLOS_2023_10.1145/3575693.3575754},
sustainable AI policy recommendations \cite{OECD_MeasuringEnvronmentalImpacts_AI_2022_/content/paper/7babf571-en,AI_SustainableAI_Regulation_PrivacyLawScholarsConf_2023}, among others.
As supply-side solutions, data center operators have also increasingly
adopted carbon-free energy such as solar and wind power, (partially) powering AI servers and lowering
carbon emissions \cite{Carbon_SustainbleAI_CaroleWu_MLSys_2022_wu2022sustainable,Google_SustainabilityReport_2022,Facebook_SustainabilityReport_2021}. Additionally, to reduce
on-site water consumption and mitigate
the stress on already-limited freshwater resources, climate-conscious cooling
system designs (e.g., using air-side economizers
if the climate condition permits) have recently seen an uptick in the data center industry \cite{Google_Water_Commitments_website,Facebook_Water_2023_meta}.

While existing efforts are encouraging, a worrisome outcome --- environmental inequity --- has unfortunately emerged. 
That is, minimizing the total environmental cost of AI across multiple regions
does not necessarily mean each region is treated equitably. In fact, AI's environmental footprint is often disproportionately higher in certain regions than in others, potentially exacerbating other unintended social-ecological consequences \cite{Justice_DoE_website}.
 For example, a data center's on-site cooling water usage effectiveness (WUE, the ratio
of water consumption to IT energy consumption) highly depends
on the outside temperature \cite{Shaolei_Water_SpatioTemporal_GLB_TCC_2018_7420641} ---
while it can stay well below 1.0 L/kWh for data centers located in cooler climates,
the monthly average WUE can be as high
as 9.0 L/kWh in the summer in drought-stricken Arizona
\cite{Water_DataCenterEnergy_Tradeoff_Arizona_Real_Measurement_WUE_Monthly_2022_KARIMI2022106194}. 
Likewise, there exists a significant regional difference
in terms of the carbon efficiency
--- as of 2020, only 4\% of the energy
for Google's data
center in Singapore is carbon-free, whereas
this number goes up to 94\% in Finland \cite{ML_CarbonFoorptint_Google_JeffDean_Journal_2022_9810097}, creating
a 23$\times$ disparity.
Thus, as a result of such regional differences, 
certain data center locations are severely disadvantaged
and more negatively impacted by the environmental toll of AI.
Further compounded by enduring socioeconomic disparities and even potentially
amplified by existing data center scheduling algorithms,
environmental inequity of AI can pose critical business risks
 and hence needs to be properly reconciled.

Indeed, addressing
its environmental inequity is  increasingly important and becoming integral
to responsible AI and computing 
\cite{Justice_AI_DataCenterConstructionLocation_Policy_UMich_Report_2022,Justice_AINowInstitute_CallforEnvironmentJusticeAI_2021}. 
For example, in the first-ever
global agreement to ensure
healthy development of AI,
the United Nations Educational, Scientific and Cultural Organization (UNESCO) recommends that ``AI should not be used'' if it creates
``disproportionate negative impacts on
the environment'' \cite{Justice_Policy_Ethical_AI_Recommendation_UNESCO_2022}.
The AI Now Institute even compares the uneven regional distribution of AI's environmental costs to ``historical practices of settler colonialism and racial capitalism'' in its 2023 Landscape report \cite{Justice_AINowInstitute_ConfrontingTechPower_2023}.
 Among all the environmental-related
topics, Meta ranks environmental \emph{justice} as
the most critical one with the greatest impact
on its business risks and opportunities \cite{Facebook_SustainabilityReport_2021}.
More recently, studies have also emerged to suggest
new regulations pertinent to AI's growing environmental
footprint \cite{AI_SustainableAI_Regulation_PrivacyLawScholarsConf_2023},
and holistic assessment of AI as social-ecological-technological systems
using available tools from environmental justice \cite{Justice_Policy_Discussion_Dobbe_TUDelft_FAccT_2023,Justice_LanguageModel_Dangers_PolicyDiscussion_TimnitGebru_FAccT_2021_10.1145/3442188.3445922}.

In this paper, we take a first step to address the emerging 
environmental inequity of AI by balancing its negative environmental impact
across geographically distributed data centers. More concretely,
we focus on the carbon and water footprints of AI model
inference and dynamically schedule users' inference requests (also referred
to as \emph{workloads} in this paper) using equity-aware
geographical load balancing (GLB) to fairly distribute
AI's environmental cost to each region.
To mitigate environmental inequity, our key novelty is to augment
the traditional cost-saving objective by explicitly 
including minimization of the most significant 
negative environmental impacts among all the data centers. 

Nonetheless, the consideration of environmental equity
in GLB decisions creates substantial algorithmic challenges. Specifically,
due to their dependency on the long-term carbon and water footprints,
the equity-related costs
couple all the GLB decisions over $T$ time slots (see \eqref{eqn:objective}--\eqref{eqn:constraint_gateway}).
This means that the optimal GLB decisions 
require all the offline information (e.g., future workload arrivals and
water efficiency) in advance, while we must make GLB decisions online without
knowing all the future.
To address this challenge, we propose a new online equity-aware GLB algorithm,
called \ouralgonline, which leverages online information
to optimize the GLB decisions based on dual mirror descent.
We also bound the cost performance of \ouralgonline compared
to the offline optimal equity-aware GLB algorithm (\ouralgoffline).

To empirically evaluate our proposed
equity-aware GLB, we run trace-based simulations by considering a set of 10 geographically distributed data centers that serve inference requests for a large language AI model. Our results demonstrate
that the proposed equity-aware GLB can significantly reduce
the carbon and water footprints in the most disadvantaged region.
In stark contrast,
 existing carbon- and water-saving  GLB approaches may even
 amplify environmental inequity, showing that minimizing
 the total environmental footprint does not necessarily
 treat each region fairly.

In summary, our work is the first study to advance AI's
environmental equity via GLB, connecting research across
data center scheduling, sustainable AI, and equitable AI.
It highlights the need and great potential of equity-aware GLB
to fairly distribute AI's environmental cost across different
regions for environmental equity.

\section{Problem Formulation}\label{sec:problem_formulation}
%\subsection{Model}

While AI model training is energy-intensive,
the environmental footprint of its inference phase 
is also enormous and can even be several times higher
than the training process \cite{Carbon_GenerativeAI_AndrewChien_Chicago_HotCarbon_2023_10.1145/3604930.3605705}.
As such,
we 
consider  a pre-trained AI model
(e.g., large language model) 
and focus on the inference phase.
The AI model inference service is deployed
over a set $\mathcal{N}=\{1,\cdots,N\}$ of geographically distributed data centers to serve users in different regions.
There are a set $\mathcal{J}=\{1,\cdots,J\}$ of  front-end
traffic gateways
that aggregate users' requests from their respective
surrounding areas and assign the requests to data centers,
which is also referred to as geographical load balancing (GLB) in the literature \cite{Liu:2011:GGL:1993744.1993767,Carbon_GenerativeAI_AndrewChien_Chicago_HotCarbon_2023_10.1145/3604930.3605705}.
The GLB decisions are made in a time-slotted manner
over a total of $T$ time slots. Each time slot
can range from a few minutes to about an hour, depending on how frequently
the decisions are updated. We also interchangeably use
``workloads'' and ``requests'' when referring to users' demand for the AI model inference service. Our model is consistent with those used
in the literature such as \cite{Liu:2011:GGL:1993744.1993767,Google_CarbonAwareComputing_PowerSystems_2023_9770383,Shaolei_Water_SpatioTemporal_GLB_TCC_2018_7420641}.

Each data center houses a cluster of servers (typically each equipped with multiple GPUs) to host AI models for inference. For
the ease of presentation, we assume a homogeneous
AI model on all the servers, 
while the extension
to heterogeneous AI
models with different model sizes is considered in
Appendix~\ref{sec:heterogeneous_AI_models}.
 %The AI model is hosted on
During each time slot, the maximum service capacity for
the AI model inference is $M_i$ for data center $i$.
We use $\lambda_{j,t}$ to denote the total amount of workloads arriving  at gateway $j$ at time $t$, and $x_{ij}(t)\geq0$ to represent
the GLB decision (i.e., the load assigned to data center $i$
from gateway $j$). For the convenience of presentation,
we also use $x(t)=\{x_{i,j}(t)\,|\,i\in\mathcal{N},j\in\mathcal{J}\}$ as the collection of all the GLB decisions
at time $t$.

The total load
assigned to data center $i$ is $\sum_{j\in\mathcal{J}} x_{ij}(t)\leq M_i$ at time $t$, thus resulting in a total server energy consumption of $e_i(x(t))$ which is an increasing function of $\sum_{j\in\mathcal{J}} x_{ij}(t)$.
For example, a common model \cite{Liu:2011:GGL:1993744.1993767} is
%to express $e_i(x(t))$ can be expressed
%as 
$e_i(x(t))=\rho_{i,t} \bar{E}_{i,s}+ \frac{\sum_{j\in\mathcal{J}} x_{ij}(t)}{M_i}\cdot \bar{E}_{i,d}$ where $\bar{E}_{i,s}$
is the server cluster's static/idle energy
even when no workload is processed in data center $i$,
$\bar{E}_{i,d}$ is the cluster's dynamic energy consumed when only
processing workloads,
 $\frac{\sum_{j\in\mathcal{J}} x_{ij}(t)}{M_i}$ is
the cluster-level utilization, and $\frac{\sum_{j\in\mathcal{J}} x_{ij}(t)}{M_i}\leq \rho_{i,t}\leq1$ indicates how well the cluster is right-sized in proportion
to the workloads (i.e., $\rho_{i,t}=\frac{\sum_{j\in\mathcal{J}} x_{ij}(t)}{M_i}$
means the cluster is perfectly sized to the workloads by turning
off unused servers, while $\rho_{i,t}=1$ means the servers are always kept on
regardless of the assigned workloads).

Next, we model the energy cost,  carbon footprint,
and water footprint
 in terms of the GLB decisions.
Here, we explicitly model carbon and water footprints separately,
as they are two complementary
and \emph{non-substitutable} measures for ecological impacts  \cite{Water_DifferentFromCarbon_Ecological_Journal_FANG2014508}.

{\textbf{Energy cost.}}
Suppose that the electricity price
and power usage effectiveness (PUE, which accounts
for non-IT energy consumption such as cooling systems and
power distribution losses) are
$p_{i,t}$ and $\gamma_{i,t}$ for data center $i$
at time $t$, respectively. Then,  the total
energy cost at time $t$ can be written
as
\begin{equation}\label{eqn:energy_cost}
  g_t({x}(t))=%\sum_{i\in\mathcal{N}} p_{i,t}\gamma_{i,t}e_{i,t}=
  \sum_{i\in\mathcal{N}} p_{i,t}\gamma_{i,t}e_{i}(x(t)).
\end{equation}
 %The price $p_{i,t}$ can also capture other applicable
%energy-proportional charges (e.g., water consumption
%$g({x}(t))=\sum_{i=1}^N p_{i,t}\gamma_{i,t}e_{i,t}=\sum_{i=1}^N %p_{i,t}\gamma_{i,t}e_{i}(\sum_{j=1}^J x_{ij}(t))$.
 Note that, if the AI model inference service is run
on virtual machine (VM) instances rented from public cloud providers,
the electricity price $p_{i,t}$ becomes the VM price subject to the VM instance type
and $g_t({x}(t))=
  \sum_{i\in\mathcal{N}} p_{i,t} e_{i}(x(t))$
is the total VM rental cost at time $t$ where $e_{i}(x(t))$ represents
the number of VM instances rented to process
the assigned workloads in location $i$.

{\textbf{Carbon footprint.}}
The carbon footprint of AI model inference is
embedded in the generation of electricity using
carbon-intensive fuels such as coal \cite{Gao:2012:EG:2377677.2377719,Carbon_LCA_ACT_ArchitecturalModeling_HarvardFacebook_ISCA_2022_10.1145/3470496.3527408,ML_Carbon_Bloom_176B_Sasha_Luccioni_arXiv_2022_luccioni2022estimating}.
By denoting
the carbon intensity as $\alpha_{i,t}$ with a unit
of gram/kWh, we have
the following carbon footprint for data center $i$
at time $t$:
\begin{equation}\label{eqn:carbon_footprint}
\begin{split}
c_{i,t}(x(t)) = %\alpha_{i,t}\gamma_{i,t}e_{i,t}=
\alpha_{i,t}\gamma_{i,t}e_{i}(x(t))
\end{split}
\end{equation}
The carbon intensity $\alpha_{i,t}$ can be obtained by querying
the local utility or averaging
the carbon intensity of the grid's fuel mix \cite{Gao:2012:EG:2377677.2377719}.

{\textbf{Water footprint.}} 
In parallel with the carbon footprint,
data centers' staggering water footprint has recently become
a new focused area for sustainability (see, e.g.,
the pledge of ``Water Positive by 2030'' by big techs \cite{Google_Water_Commitments_website,Microsoft_SustainabilityReport_2023}).
To serve AI model inference,
data centers consume
clean freshwater both directly and indirectly \cite{Water_DataCenterFootprint_EnvironmentalResearcHLetters_VT_2021_siddik2021environmental,Google_Water_Commitments_website,Shaolei_WATCH_TCC_2015,Shaolei_Water_AI_Thirsty_arXiv_2023_li2023making}. 
The direct water consumption comes from
the cooling system to keep servers from overheating.
AI servers use either air or closed-loop
liquid to transfer the heat to the facility level
(e.g., the facility cooling water loop
or heat exchanger \cite{Water_NREL_CoolingHybrid_2018}).
Then, to further reject the heat into the outside environment,
data centers
commonly use cooling towers
due to their
energy efficiency and applicability to a wide
range of weather conditions. Nonetheless, a large amount
of water is evaporated into the outside environment (i.e.,
not discharged or returned to the source)
and hence considered ``consumed'' \cite{Google_Water_Commitments_website}.
For example, depending on the outside wet-bulb temperature, a cooling tower typically consume
1$\sim$4 liters of water
(up to 9 liters of water in the summer) for each kWh server energy \cite{Water_DataCenterEnergy_Tradeoff_Arizona_Real_Measurement_WUE_Monthly_2022_KARIMI2022106194}. Importantly, the vast majority
of the cooling water supply is drinking-grade
(e.g., nearly 90\% for Google's U.S. data centers in 2021 \cite{Google_Water_Commitments_website}).
Alternatively, air-side economizers
(i.e., directly using outside air to cool down servers) can
be used to save water if the climate condition is suitable,
but water is still needed when the outside temperature is high and/or
the humidity is low --- Meta's state-of-the-art cooling
systems use an average of 0.26 liters of water for each kWh
server energy across its global data center fleet in 2021 \cite{Facebook_SustainabilityReport_2021}.
%The same as carbon footprint,
 AI systems are also accountable for water consumption
embedded in the
electricity generation process.
For example,
thermal and nuclear power plants require a large
volume of water consumption for cooling, while
hydropower consumes water by expediting evaporation downstream
\cite{Water_Electricity_EWIF_Water_Intensity_WorkingPaper_WorldResourcesInstitute_2020_reig2020guidance,DataCenter_US_EnergyUsageReport_2016_LBNL}.

%By combining both direct and indirect
%water consumption, 
Thus, the total water footprint for data center $i$
at time $t$ is
\begin{equation}\label{eqn:water_footprint}
\begin{split}
w_{i,t}\left(x(t)\right) = %\left[\epsilon_{i,t}+\beta_{i,t}\gamma_{i,t}\right]\cdot %e_{i,t}=
\left[\epsilon_{i,t}+\beta_{i,t}\gamma_{i,t}\right]\cdot e_{i}\left(x(t)\right),
\end{split}
\end{equation}
where $\epsilon_{i,t}$ is the direct water usage effectiveness
(WUE) for on-site cooling, $\beta_{i,t}$
is the indirect WUE for off-site electricity generation,
and $\gamma_{i,t}$ is the PUE. The direct WUE is defined as the ratio
of water consumption to IT server energy consumption \cite{GreenGrid_Whitepaper_WUE},
and hence we do not need to multiply $\gamma_{i,t}$
when calculating the direct water consumption.
In practice, the direct WUE $\epsilon_{i,t}$
heavily depends on the outside temperature
 %,
%and hence can be modeled as a time-varying function in terms of
%the outside weather condition 
\cite{Water_DataCenterEnergy_Tradeoff_Arizona_Real_Measurement_WUE_Monthly_2022_KARIMI2022106194,Shaolei_Water_AI_Thirsty_arXiv_2023_li2023making}.
Like the carbon intensity, the indirect WUE $\beta_{i,t}$
measures the water consumption per kWh electricity generation and
can be calculated  by averaging
over the water intensity of different energy fuels  \cite{Shaolei_Water_SpatioTemporal_GLB_TCC_2018_7420641}.
The monetary price for on-site water consumption
is typically much smaller compared to the energy cost
and can be factored into the price $p_{i,t}$ for modeling purposes.
 %Note that, as a large bulk of the on-site water cost
%is the fixed connection charge based on the maximum water consumption rate,
%the cost for the actual on-site water usage
%is typically much smaller compared to the energy cost. Thus,
%we exclude the actual monetary water cost from our problem formulation.

%\section{Geographical Load Balancing for Environmentally Equitable AI}

\section{Environmentally Equitable GLB}

To make AI environmentally equitable, we propose
a novel online equity-aware GLB algorithm, called \ouralg,
to distribute AI's environmental cost across
different data centers in a fair manner.

\subsection{Objective}\label{sec:problem_basic_setting}
Our goal is not to blindly \emph{equalize} its
regional environmental footprint, which,
as similarly observed in the context of mitigating AI's
algorithmic
unfairness \cite{Fair_MiniMaxGroupFairness_AaronRoth_Amazon_2021_Diana2021_10.1145/3461702.3462523},
may artificially elevate the environmental footprints in those otherwise
advantaged regions  and provide a false sense of fairness.
Instead, we adopt the notion of \emph{minimax} fairness
\cite{Fair_MinimaxPareto_Fairness_ICML_2020_10.5555/3524938.3525565,Fair_MiniMaxGroupFairness_AaronRoth_Amazon_2021_Diana2021_10.1145/3461702.3462523,Fair_MaxminFair_SchedulingWireless_INFOCOM_2002_1019322}
and exploit the power of GLB as a software-based approach
to explicitly minimize AI's environmental impact
on the most disadvantaged region.

Mathematically, we augment
the traditional cost-saving objective by including the
minimization of the greatest
environmental cost among all the data centers. 
By normalizing the energy cost and environmental footprints over $T$, our
equity-aware GLB problem
is
formulated as follows 
\begin{subequations}\label{eqn:objective_offline}
 \begin{gather}\label{eqn:objective}
    \begin{gathered}
      \min_{x(t)} \frac{1}{T} \sum_{t=1}^Tg_t(x(t))+\mu_c\cdot
       \max_{i\in\mathcal{N}}\left[\mathcal{H}_{i,c}\left(\frac{1}{T} \sum_{t=1}^Tc_{i,t}(x(t))\right)\right] \\
    +\mu_w\cdot\max_{i\in\mathcal{N}}\left[\mathcal{H}_{i,w}\left(\frac{1}{T} \sum_{t=1}^Tw_{i,t}\left(x(t)\right)\right)\right],
    \end{gathered}\\
       \label{eqn:constraint_delay_assignment}
      s.t., \;\;\;\;\;\;    x_{i,j}(t)=0, \;\text{ if } B_{i,j}=0,\;\;\;\; \forall\; i\in{\mathcal{N}},
       j\in{\mathcal{J}}, t=1,\cdots,T,
          \\
          \label{eqn:constraint_datacenter_capacity}
      \sum_{j\in\mathcal{J}}x_{i,j}(t)\leq M_i, \;\;\;\; \forall
       \; i\in\mathcal{N},t=1,\cdots,T,\\
       \label{eqn:constraint_gateway}
         \sum_{i\in\mathcal{N}}x_{i,j}(t)= \lambda_{j,t}, \;\;\;\; \forall
       \; j\in\mathcal{J}, t=1,\cdots,T, 
 \end{gather}
\end{subequations}
where the assignment condition $B_{i,j}=0$
indicates that the workloads cannot be assigned from gateway $j$
to data center $i$ (due to, e.g., latency constraints
or  data sovereignty regulations) and hence enforces $x_{i,j}=0$  in \eqref{eqn:constraint_delay_assignment},
the constraint \eqref{eqn:constraint_datacenter_capacity} means
that the total workloads assigned to a data center cannot exceed
its processing capacity, and the constraint \eqref{eqn:constraint_gateway}
requires that all workloads arriving at a gateway be assigned to data centers.
In the optimization objective \eqref{eqn:objective},
the monotonically-increasing convex functions $\mathcal{H}_{i,c}()$ and $\mathcal{H}_{i,w}()$
quantify the environmental impacts of AI on data center $i$ due
to its long-term carbon footprint
and water footprint, respectively, and can be specified
based on the local environment assessment. 
Note that the carbon footprint is also
a good indicator of the amount
of local air/thermal pollution caused by
 our GLB decisions.
For example,
coal-based energy sources are carbon-intensive
and also proportionally create air and thermal pollution for local communities \cite{US_EPA_Electricity_LocalImpact}.

Using $\mathcal{H}_{i,w}\left(\frac{1}{T}\sum_{t=1}^Tw_{i,t}\left(x(t)\right)\right)=\frac{\theta_i}{T}
\cdot\sum_{t=1}^Tw_{i,t}\left(x(t)\right)$
as an illustrative example,
we can set a higher $\theta_i\geq0$
if data center $i$ is located in a severely water-stressed
and drought-prone region.
In line with the principle of proportionality,
the  carbon footprint
$\sum_{t=1}^Tc_{i,t}(x(t))$
in $\mathcal{H}_{i,c}()$
and water footprint $\sum_{t=1}^Tw_{i,t}(x(t))$
in  $\mathcal{H}_{i,w}()$
for data center $i$ can also be normalized
by the maximum processing capacity $M_i$ to achieve
proportional fair distribution of AI's environmental cost.

The two functions $\mathcal{H}_{i,c}()$ and $\mathcal{H}_{i,w}()$
are general enough and can also capture
the effects of additional sustainability practices that data center operators
may adopt (e.g., installing solar for carbon mitigation and
restoring watersheds for local water supply
\cite{Facebook_SustainabilityReport_2021,Google_SustainabilityReport_2022}).
The term $\sum_{t=1}^Tg_t(x(t))$ in \eqref{eqn:objective}
is the total energy cost.
The hyperparameters $\mu_c\geq0$
and $\mu_w\geq0$ indicate the relative importance weights
of carbon footprint equity and water footprint equity, respectively,
and can be flexibly tuned to balance the impact of carbon and water footprints. For example, by setting $\mu_c=0$, we focus solely
on the negative environmental impact of AI's water footprint.
In addition, we can also include into \eqref{eqn:objective}
AI's other environmental impacts such as concerns with the servers' noise pollution if applicable \cite{DataCenter_EnvironmentalImpacts_CarbonWaterNoise_MIT_News_2022}.

Importantly,
the cost terms
$\max_{i\in\mathcal{N}}\left[\mathcal{H}_{i,c}\left(\frac{1}{T}\sum_{t=1}^Tc_{i,t}(x(t))\right)\right]$
and $\max_{i\in\mathcal{N}}\left[\mathcal{H}_{i,w}\left(\sum{1}{T}\sum_{t=1}^Tw_{i,t}\left(x(t)\right)\right)\right]$ improve environmental
equity by explicitly penalizing the greatest environmental impacts
that AI model inference creates on different regions. This is fundamentally
different from the existing sustainable GLB techniques that have predominantly
focused on minimizing the weighted \emph{sum} of energy costs, carbon footprint and/or
water footprint \cite{Liu:2011:GGL:1993744.1993767,Gao:2012:EG:2377677.2377719,Shaolei_Water_SpatioTemporal_GLB_TCC_2018_7420641,Carbon_CarbonExplorer_HolisticGreenDataCenter_CaroleWu_BenjaminLee_ASPLOS_2023_10.1145/3575693.3575754,LeBianchiniNguyenBilgirMartonosi_IGCC_2010}. As shown in our experiments (Section~\ref{sec:experiment}),
minimizing the total environmental footprint does not necessarily
treat each individual region fairly
and can even potentially exacerbate environmental inequity due to aggressive
 exploitation of certain regions.

\subsection{An Online Algorithm}

The addition of two equity-related costs in \eqref{eqn:objective}
explicitly mitigates the greatest long-term environmental costs across all the data centers. Thus, 
they couple all the GLB decisions over $T$ time slots.
Consequently, the optimal GLB decisions require complete offline information
(including future workload arrivals and carbon/water efficiencies)
in advance, which is lacking in practice.
Next, we propose an 
online algorithm, called \ouralg, to solve
\eqref{eqn:objective}--\eqref{eqn:constraint_gateway} and
optimize equity-aware GLB decisions
in an online manner. 

A crucial step in \ouralg is to construct a new optimization problem
that can be solved based on available online information
by removing the dependency of the optimization
objective on future information.
To this end, we first  transform the original problem
\eqref{eqn:objective}--\eqref{eqn:constraint_gateway}
into an equivalent new problem that can be solved using dual mirror descent (DMD).
Specifically, for every time step $t\in [1,T]$, we introduce a set of auxiliary variables $\{ z_{c}(t), z_{w}(t) \}$ and consider
the following new transformed problem: 
\begin{subequations}\label{eqn:objective_dmd}
 \begin{gather}
 \begin{gathered}\label{eqn:objective_dual} \min_{x(t), z_{c}(t), z_{w}(t)}
 \frac{1}{T} \sum_{t=1}^T g_t(x(t)) + \frac{\mu_c}{T}\sum_{t=1}^T\max_{i\in\mathcal{N}} \left[\mathcal{H}_{i,c}({z}_{i,c}(t) )\right]  \\
       +\frac{\mu_w}{T}\sum_{t=1}^T\max_{i\in\mathcal{N}} \left[ \mathcal{H}_{i,w}({z}_{i,w}(t) ) \right] 
 \end{gathered}\\
    s.t.,\;\;\;\;\;\; \text{ constraints } \eqref{eqn:constraint_delay_assignment}\eqref{eqn:constraint_datacenter_capacity}\eqref{eqn:constraint_gateway} \label{eqn:action_single_step_constraint}\\
       \frac{1}{T}\sum_{t=1}^T{z}_{i,c}(t) \geq \frac{1}{T}   \sum_{t=1}^Tc_{i,t}(x(t)), \;\;\forall i \in  \mathcal{N}\label{eqn:constraint_gateway_dual_1}\\
          \frac{1}{T}   \sum_{t=1}^T{z}_{i,w}(t) \geq \frac{1}{T}  \sum_{t=1}^Tw_{i,t}\left(x(t)\right), \;\; \forall i \in  \mathcal{N} \label{eqn:constraint_gateway_dual_2}
\end{gather}
\end{subequations}
where the auxiliary variables $z_c(t)=({z}_{1,c}(t)\cdots,z_{N,c}(t))$ and $z_w(t)=({z}_{1,w}(t)\cdots,z_{N,w}(t))$  are chosen from a fixed feasible set $\cZ_c$ and $\cZ_w$, respectively. 
Here, we set $\mathcal{Z}_c=\{z_c|0\leq z_{i,c}\leq \bar{z}_{i,c}, \forall i=1,\cdots,N\}$ and $\mathcal{Z}_w=\{z_w|0\leq z_{i,w}\leq \bar{z}_{i,w}, \forall i=1,\cdots,N\}$  to guarantee a feasible solution for any $x_t \in \mathcal{X}_t$. Specifically,
we can choose $\bar{z}_{i,c}$ and $\bar{z}_{i,w}$ to be the maximum possible per-time carbon footprint and water footprint
in data center $i$, respectively. 

Next, we prove the equivalence
of the new transformed problem to the original problem.

\begin{lemma}
    The transformed problem \eqref{eqn:objective_dual}--\eqref{eqn:constraint_gateway_dual_2}
    and the original problem
    \eqref{eqn:objective}--\eqref{eqn:constraint_gateway}
    have the same optimal GLB decisions.
\end{lemma}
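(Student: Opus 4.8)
The plan is to show that, for every decision profile $x$ feasible for \eqref{eqn:objective}--\eqref{eqn:constraint_gateway}, minimizing the transformed objective \eqref{eqn:objective_dual} over the auxiliary variables $\{z_c(t),z_w(t)\}$ alone, with $x$ held fixed, reproduces exactly the original objective value \eqref{eqn:objective}. Since the energy-cost term $\frac{1}{T}\sum_{t=1}^T g_t(x(t))$ is identical in both problems and does not involve the auxiliary variables, and since the carbon block (the $\mu_c$ term together with constraint \eqref{eqn:constraint_gateway_dual_1}) and the water block (the $\mu_w$ term together with constraint \eqref{eqn:constraint_gateway_dual_2}) decouple and are structurally identical, it suffices to treat a single equity term; I would do so for carbon and invoke symmetry for water. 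Once this pointwise identity of objective values is established, taking the minimum over $x$ on both sides forces the two problems to share the same optimal $x$.

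First I would fix a feasible $x$ and abbreviate the time-averaged carbon footprint of data center $i$ as $C_i=\frac{1}{T}\sum_{t=1}^T c_{i,t}(x(t))$. The claim then reduces to proving
\begin{equation*}
\min_{z_c}\ \frac{1}{T}\sum_{t=1}^T \max_{i\in\mathcal{N}} \mathcal{H}_{i,c}(z_{i,c}(t)) \;=\; \max_{i\in\mathcal{N}} \mathcal{H}_{i,c}(C_i),
\end{equation*}
where the minimum is over $z_c\in\mathcal{Z}_c$ subject to \eqref{eqn:constraint_gateway_dual_1}; that is, the transformed ``average-of-max'' collapses to the original ``max-of-average.''

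For the lower bound I would fix any index $i_0$, bound the outer maximum below by its single term $\mathcal{H}_{i_0,c}(z_{i_0,c}(t))$ for each $t$, and then exploit convexity of $\mathcal{H}_{i_0,c}$: by Jensen's inequality the time-average satisfies $\frac{1}{T}\sum_t \mathcal{H}_{i_0,c}(z_{i_0,c}(t)) \geq \mathcal{H}_{i_0,c}\!\left(\frac{1}{T}\sum_t z_{i_0,c}(t)\right)$, and since $\mathcal{H}_{i_0,c}$ is increasing while constraint \eqref{eqn:constraint_gateway_dual_1} gives $\frac{1}{T}\sum_t z_{i_0,c}(t)\geq C_{i_0}$, this is at least $\mathcal{H}_{i_0,c}(C_{i_0})$; maximizing over $i_0$ yields ``$\geq$.'' For the matching upper bound I would exhibit the constant feasible choice $z_{i,c}(t)\equiv C_i$ for all $t$: feasibility holds because $C_i$ is an average of the per-slot footprints $c_{i,t}(x(t))$, each bounded by $\bar{z}_{i,c}$ (the maximum per-slot carbon footprint used to define $\mathcal{Z}_c$), so $0\le C_i\le \bar{z}_{i,c}$; and this choice makes $\max_{i} \mathcal{H}_{i,c}(z_{i,c}(t))=\max_{i}\mathcal{H}_{i,c}(C_i)$ constant in $t$, so its time-average equals $\max_{i}\mathcal{H}_{i,c}(C_i)$, giving ``$\le$.''

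The main obstacle is the lower bound, and specifically the Jensen step: it is the convexity (together with monotonicity) of $\mathcal{H}_{i,c}$ and $\mathcal{H}_{i,w}$ that guarantees no time-varying allocation of the auxiliary mass can undercut the constant allocation, which is precisely what makes the two formulations coincide. I would also verify the minor technical point flagged above, namely that $\bar{z}_{i,c}$ and $\bar{z}_{i,w}$ are chosen large enough for the constant solution to lie in $\mathcal{Z}_c$ and $\mathcal{Z}_w$, so the transformed feasible region is nonempty for every feasible $x$. Repeating the identical argument for the water term and adding the common energy-cost term, I conclude that the minimized transformed objective equals the original objective at every feasible $x$; hence their minimizers over $x$ coincide, which is exactly the claimed equivalence of optimal GLB decisions.
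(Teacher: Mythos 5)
Your proof is correct, and it rests on the same two ingredients as the paper's: the constant auxiliary choice $z_{i,c}(t)\equiv \frac{1}{T}\sum_t c_{i,t}(x(t))$ for one direction, and Jensen's inequality (convexity of $\mathcal{H}_{i,c},\mathcal{H}_{i,w}$) combined with monotonicity and constraint \eqref{eqn:constraint_gateway_dual_1} for the other. The difference is organizational rather than substantive. The paper argues only at the optimum: it constructs a feasible transformed solution from the original optimizer $x^*_{1:T}$ to get one inequality between optimal values, and then rules out a strictly smaller transformed value by a contradiction argument (Jensen plus monotonicity would let $x'$ beat $x^*$ in the original problem). You instead prove the stronger pointwise identity that, for \emph{every} feasible $x$, partially minimizing \eqref{eqn:objective_dual} over $(z_c,z_w)$ subject to \eqref{eqn:constraint_gateway_dual_1}--\eqref{eqn:constraint_gateway_dual_2} collapses the ``average-of-max'' exactly to the ``max-of-average'' in \eqref{eqn:objective}. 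This marginalization viewpoint buys a cleaner conclusion --- the two problems have identical objective landscapes over $x$, so their optimizer \emph{sets} coincide, not just their optimal values --- and it avoids the contradiction framing entirely; the paper's version is shorter precisely because it never needs the identity away from the optimum. Your attention to feasibility of the constant choice (that $0\le C_i\le \bar z_{i,c}$ so it lies in $\mathcal{Z}_c$) is a point the paper glosses over but uses implicitly.
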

\begin{proof}
To prove this, we first define the optimal GLB decisions as $x^*_{1:T}
=(x(1)^*,\cdots,x(T)^*)$ for the original problem \eqref{eqn:objective}--\eqref{eqn:constraint_gateway}. 
Then, we can construct a feasible solution $z_{i, c}(t) = \frac{1}{T}\sum_{t=1}^Tc_{i,t}(x^*(t))$ and $z_{i,w}(t) = \frac{1}{T} \sum_{t=1}^Tw_{i,t}\left(x^*(t)\right)$,  $\forall t \in [1,T]$ for the transformed problem \eqref{eqn:objective_dual}--\eqref{eqn:constraint_gateway_dual_2}, which results in an equivalent objective function value as \eqref{eqn:objective} in the original problem.
Therefore, the optimal value of the transformed objective in ~\eqref{eqn:objective_dual} is less than or equal to that in the original problem.  

On the other hand, suppose that there exists another solution,
denoted as $\{x'(t), z_{c}(t)', z_{w}(t)', t\in[1,T]\}$, 
which minimizes the transformed problem and makes the transformed objective in \eqref{eqn:objective_dual} strictly smaller than the original one in \eqref{eqn:objective}. By the convexity assumption of $\cH_{i,w}(\cdot)$ and $\cH_{i,c}(\cdot)$ and Jensen's inequality, we have
\begin{gather}
\max_{i \in \cN}\left[\cH_{i,c}\left(\frac{1}{T}\sum_{t=1}^T{z}'_{i,c}(t)\right)\right] \leq \frac{1}{T}\sum_{t=1}^T\max_{i \in \cN}\left[\cH_{i,c}\left({z}'_{i,c}(t)\right)\right],\\
   \max_{i \in \cN}\left[\cH_{i,w}\left(\frac{1}{T}\sum_{t=1}^T{z}'_{i,w}(t)\right)\right] \leq \frac{1}{T}\sum_{t=1}^T\max_{i \in \cN}\left[\cH_{i,w}\left({z}'_{i,w}(t)\right)\right].
\end{gather}
Based on the monotonically increasing assumption
on $\cH_{i,c}(\cdot)$ and $\cH_{i,w}(\cdot)$
and by substituting $x'_{1:T}=(x'(t),\cdots,x'(T))$ back to \eqref{eqn:objective}, 
we see that the objective value in \eqref{eqn:objective}
with $x'(t)$ as the solution
 is even smaller, which is in contradiction to the assumption 
 that $x^*_{1:T}$ is optimal. Therefore, for the transformed problem, the optimal objective value has to be the same as the original one, and the action $x_{1:T}^*=x'_{1:T}$ is the optimal solution.
\end{proof}

Based on
the equivalence
of the new transformed problem to the original problem,
we now focus on solving the transformed problem \eqref{eqn:objective_dual}--\eqref{eqn:constraint_gateway_dual_2}.
The two added constraints \eqref{eqn:constraint_gateway_dual_1}
and \eqref{eqn:constraint_gateway_dual_2} still involve
all the decisions over $T$ time slots.
To remove the temporal coupling, 
we consider the Lagrangian form of the transformed
problem \eqref{eqn:objective_dual}--\eqref{eqn:constraint_gateway_dual_2}.
For the convenience of notation,
we first define
\begin{align}
\cH_c(z_c(t)) & = [\cH_{1,c}(z_c(t)),\cdots, \cH_{N,c}(z_c(t))],\\
\cH_w(z_w(t))& = [\cH_{1,w}(z_w(t)),\cdots, \cH_{N,w}(z_w(t))],\\
\cC(t)& = [ c_{1,t}(x(t)), \cdots,  c_{N,t}(x(t))  ],\\
\cW(t)& = [ w_{1,t}(x(t)), \cdots,  w_{N,t}(x(t))  ].
\end{align}
Then, subject to the constraints \eqref{eqn:constraint_delay_assignment}\eqref{eqn:constraint_datacenter_capacity}\eqref{eqn:constraint_gateway},
we write the Lagrangian as follows:
 \begin{equation}\label{eqn:lagrange_functoin}
\begin{aligned}
    &\mathcal{L}(x_{1:T}, z_{c,1:T}, z_{w, 1:T}, \kappa) \\
    =\;& \frac{1}{T}\left( \sum_{t=1}^T g_t(x(t)) +  \mu_c\| \cH_{c}({z}_{c}(t)) \|_{\infty} + \mu_w\| \cH_{w}({z}_{w}(t)) \|_{\infty} \right) \\
    & +  \langle \kappa, \begin{bmatrix} \frac{1}{T} \cdot \left(\sum_{t=1}^T\cC_{t}(x(t)) - \sum_{t=1}^T{z}_{c}(t) \right) \\ \frac{1}{T} \cdot \left(\sum_{t=1}^Tw_{t}(x(t)) - \sum_{t=1}^T{z}_{w}(t)\right)  \end{bmatrix} \rangle
\end{aligned}
\end{equation}
where $\kappa$ is the Lagrangian multipliers associated
with the constraints \eqref{eqn:constraint_gateway_dual_1}
and \eqref{eqn:constraint_gateway_dual_2},
and $\langle a,b\rangle$ denotes the inner product of
two vectors $a$ and $b$.

By solving the problem \eqref{eqn:lagrange_functoin} 
online subject to the constraints \eqref{eqn:constraint_delay_assignment}\eqref{eqn:constraint_datacenter_capacity}\eqref{eqn:constraint_gateway}, we would obtain the optimal GLB decisions  
if the optimal Lagrangian multiplier  $\kappa$ were provided.
Nonetheless, $\kappa$ can only be estimated with online information.
Based on this insight,
we sequentially update $\kappa$ using dual mirror descent (DMD)  \cite{OMD_book}
based on online information
and obtain GLB decisions $x(t)$ for $t=1,\cdots, T$.

We describe the algorithm in Algorithm~\ref{alg:dmd_with_equity}.
More specifically, at time $t$, we receive the cost functions and optimize the action $x(t)$ and auxiliary variable $z(t)=(z_c(t),z_w(t))$ according to the current estimate of dual variable $\kappa_t$. These variables are optimized in Line~4 and Line~5, respectively. 
The insight is that the estimated dual variable $\kappa_t$ controls the adjusted penalty for the GLB action $x(t)$ based on how much
the cumulative actual carbon and water footprints
have deviated from the targets $z(t)=(z_c(t),z_w(t))$.

We update the dual variable $\kappa_t$ using DMD.
More concretely, by taking the subgradient of $\kappa$ with respect to the Lagrange function and using the online information at time $t$, we obtain a stochastic gradient estimate of $\kappa_t$. 
In Line~7, the vector $d_t$ is set as the opposite direction to the gradient of $\kappa_t$ in order to minimize the Lagrange function. 
Finally, the updated dual variable $\kappa_{t+1}$ is obtained with Bregman projection using a reference function $h(\cdot)$ which is
differentiable
and strongly convex. For example, a common choice of
the reference function is $h(a)=\frac{1}{2}\|a\|^2$, which results
in additive updates of the dual variable estimate $\kappa_t$ \cite{OnlineAllocation_DualMirroDescent_Google_OperationalResearch_2022_doi:10.1287/opre.2021.2242}.

\begin{algorithm}[!t]  
	{\bf Input:} Initial Lagrange multiplier $\kappa_1 \in \RR^{2N}_{\geq 0}$, reference  function $h(\cdot): \mathbb{R}^{2N}\rightarrow  \mathbb{R}$, total length of horizon $T$ and learning rate $\eta$ \\
	\For{$t=1,\ldots,T$}{
		Receive the cost function of energy, carbon and water as $g_t(\cdot)$, $c_t(\cdot)$ and $w_t(\cdot)$, the action constraint $\mathcal{X}_t=\{x|x \text{ satisfies }\eqref{eqn:constraint_delay_assignment}\eqref{eqn:constraint_datacenter_capacity}\eqref{eqn:constraint_gateway}\}$.\\
		Make the primal decision
		\begin{equation*}
		{x}(t) = \arg\min_{x(t)\in \cX_t}\{ g_t( x(t) )  +  \kappa_t^\top \cdot \begin{bmatrix}   \cC_t(x(t))  \\   \cW_t(x(t))  \end{bmatrix}  \} \ ,
		\end{equation*}\\
		
		Determine the auxiliary variable:
		\begin{equation*}
              \begin{aligned}
                  \{z_c(t),z_w(t)\} = &\arg \min_{z_c \in \cZ_c, z_w \in \cZ_w}\{ \mu_c \| \cH_c(z_c)\|_{\infty} \\
                    &+  \mu_w \| \cH_w(z_w) \|_{\infty}  - \kappa_t^\top  \begin{bmatrix} z_c\\ z_w \end{bmatrix} \}\ ,
              \end{aligned}
		\end{equation*}\\
		
		Obtain a stochastic subgradient of $\kappa_t$: 
                \begin{equation*}
                    d_t = \begin{bmatrix} z_c(t)\\ z_w(t)\end{bmatrix} - \begin{bmatrix}   \cC_t(x(t))  \\   \cW_t(x(t))  \end{bmatrix} \ .
                \end{equation*}
		
		 Update the dual variable by mirror descent:
		\begin{equation*}\label{eqn:dmd_dual_update}
           \kappa_{t+1} = \arg\min_{\kappa \in \mathcal{R}^{2N}_{\geq 0}} \langle d_t, \kappa \rangle + \frac{1}{\eta} V_h(\kappa, \kappa_t) \ ,
            \end{equation*}
    \color{black} where $V_h(x,y)=h(x)-h(y)-\nabla h(y)^\top (x-y)$ is the Bregman divergence.
}	\caption{Online GLB for Environmentally Equitable AI (\ouralgonline)}\label{alg:dmd_with_equity}
\end{algorithm}

Next, we analyze  \ouralg in terms
of the cost objective in \eqref{eqn:objective}.
\begin{theorem}\label{thm:cost_bound}
By initializing $\kappa_1 \in \RR^{2N}_{\geq 0}$ as a zero vector,
considering the reference function $h(a) = \frac{1}{2}\|a\|^2$,
    and denoting the GLB actions as $x_{1:T}=(x(1),\cdots, x(T))$
    and the overall cost  defined in \eqref{eqn:objective}
    as $\text{cost}(x_{1:T})$,
    we have the following:
    \begin{equation}\label{eqn:total_cost_bound}
        \text{cost}(x_{1:T}) \leq \text{cost}(x_{1:T}^*) + \eta BT + C \sqrt{\frac{2}{T}(B + \frac{M}{\eta}D)}
    \end{equation}
    where $\text{cost}(x_{1:T}^*)$
    is the minimum cost given by optimal offline algorithm,
    $\eta>0$ is the learning rate, $c_m$ and $w_m$ are the maximum possible gradients of carbon and water footprints in \eqref{eqn:carbon_footprint} and \eqref{eqn:water_footprint}, $M=\max_{i \in \cN} M_i$ is the maximum processing capacity of all data centers, $\theta_m$ is the maximum gradient of $H_{i,c}(\cdot)$ and $H_{i,w}(\cdot)$, 
    $B=\frac{N}{2}
    \left[\max_{i\in \cN} \bar{z}^2_{i,c}\right]+
    \frac{N}{2}
    \left[\max_{i\in \cN} \bar{z}^2_{i,w}\right]$ (in which
    $\bar{z}_{i,c}$ and $\bar{z}_{i,w}$ are the maximum
    possible per-time carbon and water footprints
    in data center $i$, respectively),
    $C=\theta_m(\mu_c + \mu_w)$ and $D = \theta_m(\mu_c c_m + \mu_w w_m)$, respectively.
    Moreover, by setting the learning rate $\eta=\mathcal{O}(1/T)$, we have
    \begin{equation}\label{eqn:total_cost_bound_2}
        \text{cost}(x_{1:T}) \leq \text{cost}(x_{1:T}^*) + \mathcal{O}(1).
    \end{equation}
\end{theorem}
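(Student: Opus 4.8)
The plan is to treat the transformed program \eqref{eqn:objective_dmd} as an online convex problem with long-term, time-coupled constraints \eqref{eqn:constraint_gateway_dual_1}--\eqref{eqn:constraint_gateway_dual_2}, and to analyze Algorithm~\ref{alg:dmd_with_equity} as a primal-dual method through the standard three-part regret decomposition for dual mirror descent, following \cite{OnlineAllocation_DualMirroDescent_Google_OperationalResearch_2022_doi:10.1287/opre.2021.2242}. By the Lemma it suffices to compare against the offline optimum of the transformed problem. I would first record the two per-step optimality facts: the primal step (Line~4) minimizes $g_t(x)+\kappa_t^\top[\cC_t(x);\cW_t(x)]$ over $x\in\cX_t$, and the auxiliary step (Line~5) minimizes $\mu_c\|\cH_c(z_c)\|_\infty+\mu_w\|\cH_w(z_w)\|_\infty-\kappa_t^\top[z_c;z_w]$ over the box $\cZ_c\times\cZ_w$. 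Evaluating each inequality at the offline-optimal $x^*(t)$ and at the constant comparator $z^*=(\tfrac{1}{T}\sum_t\cC_t(x^*(t)),\tfrac{1}{T}\sum_t\cW_t(x^*(t)))$ and summing over $t$ bounds the online Lagrangian value by $\mathrm{cost}(x_{1:T}^*)$ plus the dual term $\sum_t\langle d_t,\kappa_t\rangle$, where $d_t$ is the subgradient used in the update.

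Next I would control the dual iterates. With $h(a)=\tfrac{1}{2}\|a\|^2$ and $\kappa_1=0$, the dual update is Euclidean projected gradient descent onto $\RR^{2N}_{\geq0}$, so the textbook mirror-descent regret inequality gives, for every fixed $\kappa\in\RR^{2N}_{\geq0}$,
\begin{equation*}
\sum_{t=1}^T\langle d_t,\kappa_t-\kappa\rangle\le\frac{1}{2\eta}\|\kappa\|^2+\frac{\eta}{2}\sum_{t=1}^T\|d_t\|^2.
\end{equation*}
Since $z_{i,c}(t)$ and $c_{i,t}(x(t))$ both lie in $[0,\bar z_{i,c}]$, and likewise for water, each coordinate of $d_t$ is bounded in magnitude by the corresponding $\bar z$, so $\|d_t\|^2\le N\max_i\bar z_{i,c}^2+N\max_i\bar z_{i,w}^2=2B$. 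Hence $\tfrac{\eta}{2}\sum_t\|d_t\|^2\le\eta BT$, which is precisely the first additive term in \eqref{eqn:total_cost_bound}.

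The crux is then converting the residual constraint violation into an objective penalty. Using convexity and monotonicity of $\cH_{i,c}$ and $\cH_{i,w}$, Jensen's inequality bounds the true equity objective $\max_i\cH_{i,c}(\tfrac{1}{T}\sum_t c_{i,t})$ (and analogously for water) by the average-of-maxima $\tfrac{1}{T}\sum_t\max_i\cH_{i,c}(z_{i,c}(t))$ produced by the auxiliary updates, up to a correction governed by the average violation $\tfrac{1}{T}\sum_t d_t$. Because the equity terms are $\theta_m$-Lipschitz and weighted by $\mu_c,\mu_w$, this correction is at most $C=\theta_m(\mu_c+\mu_w)$ times $\|\tfrac{1}{T}\sum_t d_t\|$; bounding the latter through the dual regret inequality above, where the reach of the comparator $\kappa$ (equivalently the boundedness of the dual iterates) is governed by $D=\theta_m(\mu_c c_m+\mu_w w_m)$ and the capacity scale $M$, produces the $C\sqrt{\tfrac{2}{T}(B+\tfrac{M}{\eta}D)}$ term. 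Assembling the three pieces yields \eqref{eqn:total_cost_bound}, and substituting $\eta=\mathcal{O}(1/T)$ sends $\eta BT$ to $\mathcal{O}(1)$ and the square-root argument $\tfrac{2}{T}(B+\tfrac{M}{\eta}D)$ to $\mathcal{O}(1)$, giving \eqref{eqn:total_cost_bound_2}.

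I expect this last conversion to be the main obstacle. The equity objective is a non-separable maximum of $\cH$ applied to \emph{time-averaged} footprints, so the delicate point is showing that the online auxiliary variables $z(t)$---which satisfy the averaged constraints only approximately---track the realized footprints $\cC_t(x(t)),\cW_t(x(t))$ tightly enough that the Lipschitz conversion yields exactly the stated constant $C$ and the $M/\eta$ scaling inside the square root, rather than a looser bound. Establishing the precise boundedness of $\|\kappa_t\|$ (the self-capping induced by the saturation of the auxiliary update at $\bar z$) is what pins down $D$ and $M$ and is where I anticipate the bookkeeping to be most involved.
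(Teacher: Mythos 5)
Your overall strategy matches the paper's --- per-step primal/auxiliary optimality, a bound tied to the dual iterates, and a Lipschitz-plus-Jensen conversion of the average constraint violation into the equity objective (the paper's Eqn.~\eqref{eqn:cost_difference_dual}) --- but your first decomposition step is wrong as stated, and the error is not cosmetic. Summing the two per-step optimality inequalities against the comparator $\bigl(x^*(t), z^*\bigr)$ bounds the online Lagrangian by the \emph{comparator's} Lagrangian plus $\sum_t\langle d_t,\kappa_t\rangle$, and the comparator's Lagrangian is not $\mathrm{cost}(x^*_{1:T})$: it carries the cross term $\sum_t \kappa_t^\top\bigl(\bigl[\cC_t(x^*(t));\cW_t(x^*(t))\bigr]-z^*\bigr)$. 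Your choice of $z^*$ makes the \emph{unweighted} sum $\sum_t\bigl(\bigl[\cC_t(x^*(t));\cW_t(x^*(t))\bigr]-z^*\bigr)$ vanish, but $\kappa_t$ is time-varying, so the weighted sum does not. The paper pays for exactly this term in its third lemma (Eqn.~\eqref{eqn:total_cost_ineq}) via $\|\kappa_t-\kappa_1\|\le\eta(t-1)\sqrt{2B}$, producing the $BT(T-1)$ contribution that, after the $1/T$ normalization of the costs in \eqref{eqn:objective}, becomes $\eta B(T-1)$ --- the \emph{dominant} part of the theorem's $\eta BT$. Relatedly, your matching of $\frac{\eta}{2}\sum_t\|d_t\|^2\le \eta BT$ to the first additive term of \eqref{eqn:total_cost_bound} is a factor-$T$ bookkeeping slip: the theorem's costs are time-averaged, so the mirror-descent regret term contributes only $\eta B$ after normalization. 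As written, your argument would ``prove'' the stronger bound $\eta B+C\sqrt{\cdot}$, which is not justified in this adversarial setting; restoring the omitted cross term is what brings the bound back to $\eta BT$.

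Second, the step you defer to ``bookkeeping'' is the actual crux, and the mechanism you name (saturation of the auxiliary update at $\bar z$) is not the one that works. What pins down $M$ and $D$ is evaluating the auxiliary step's optimality at the comparator $z'=\bigl(\cC_t(x(t)),\cW_t(x(t))\bigr)$, i.e., at the realized footprints (feasible since footprints lie in the box $\cZ_c\times\cZ_w$): this gives the per-step inequality $-\langle d_t,\kappa_t\rangle=\kappa_t^\top\bigl(\bigl[\cC_t(x(t));\cW_t(x(t))\bigr]-z(t)\bigr)\le \theta_m(\mu_c c_m+\mu_w w_m)M = MD$. The paper feeds this into a Lyapunov drift argument (Lemma~\ref{lemma:queue_length}) to obtain $\|\kappa_{T+1}\|\le \eta\sqrt{2T(B+MD/\eta)}$, and then reads the average violation off the queue-like update \eqref{eqn:queue_update}, since with $\kappa_1=0$ the positive part of $\sum_t(-d_t)$ is dominated coordinatewise by $\kappa_{T+1}/\eta$. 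Your comparator route can in fact reach the identical bound --- plug $-\sum_t\langle d_t,\kappa_t\rangle\le MDT$ into your regret inequality with $\kappa=\theta v/\|v\|$ for $v=\bigl[\sum_t(-d_t)\bigr]^+$ and optimize $\theta$, which yields $\|v\|/T\le\sqrt{\tfrac{2}{T}\bigl(B+\tfrac{M}{\eta}D\bigr)}$ exactly --- so this is a legitimate alternative to the paper's drift/queue analysis. But the $MD$ per-step inequality is precisely the missing lemma: without it, neither route produces the $M/\eta$ scaling inside the square root, so it must be proven rather than flagged as anticipated difficulty.
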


Theorem~\ref{thm:cost_bound} bounds the gap between  \ouralg
and the optimal offline algorithm in terms of the overall cost defined in
\eqref{eqn:objective}. The constants
$B$ and $D$ are problem-specific  and
naturally increase as the input range is larger.
In addition,
the gap depends on the choice of the learning rate $\eta$.
Specifically, by increasing $\eta$, \ouralg updates the dual variable
$\kappa$ by more aggressively following the stochastic gradient
(Line~6 in Algorithm~\ref{alg:dmd_with_equity}).
This can introduce greater drifts due to the ``forgetting'' of
the past time slots and hence increases the term $\eta B T$.
On the other hand, a larger $\eta$ can reduce the time steps
needed for updating the dual variable
to track the optimal dual variable, and hence reduce
the term $D\sqrt{\frac{2}{T}(B + \frac{M}{\eta}D)}$.
Thus,  by setting the learning rate $\eta=\mathcal{O}(1/T)$
to balance the two terms, we can have an
$\mathcal{O}(1)$ cost gap.
Note that, without further stochastic assumptions (e.g., all the inputs follow an independent and identical distribution), 
 eliminating the $\mathcal{O}(1)$ cost gap between \ouralg and the optimal offline algorithm remains an open challenge in the literature  \cite{OnlineAllocation_DualMirroDescent_Google_OperationalResearch_2022_doi:10.1287/opre.2021.2242,Neely_Universal}.
For example, in a relevant context of online budget allocation,
having a zero cost gap is impossible in general adversarial settings
that we consider  \cite{OnlineAllocation_DualMirroDescent_Google_OperationalResearch_2022_doi:10.1287/opre.2021.2242}. 
Importantly, as is shown in our experimental results
(Section~\ref{sec:results}),
\ouralg demonstrates a strong empirical performance
even compared to the optimal offline
algorithm.

\section{Experiments}\label{sec:experiment}

In this section, we report on experiments of different GLB algorithms using trace-based simulations.
Our results demonstrate that 
\ouralg has a great potential to effectively address AI's environmental inequity that
would otherwise be potentially amplified by other GLB algorithms.
Importantly, the empirical cost performance of \ouralg is
close to the optimal offline equity-aware GLB, complementing
our theoretical analysis of \ouralg in Theorem~\ref{thm:cost_bound}.

\subsection{Methodology}\label{sec:methodology}
As detailed information about AI system and workload
settings is typically proprietary,
 we run simulations
by scaling up workload traces collected from public sources and considering synthetic data center settings that approximate realistic scenarios.  This is in line with the prior GLB literature \cite{Liu:2011:GGL:1993744.1993767,RaoLiuXieLiu_2010,Gao:2012:EG:2377677.2377719,Shaolei_Water_SpatioTemporal_GLB_TCC_2018_7420641}.
Next, we describe the default setup of our experiments,
which will later be varied for sensitivity studies.

\subsubsection{Workload Trace}
To obtain the workload trace, we extract the GPU power usage 
data from \cite{ML_Carbon_Bloom_176B_Sasha_Luccioni_arXiv_2022_luccioni2022estimating} 
for the server cluster hosting the large language model BLOOM
over an 18-day period (between September 23 and
 October 11 in 2022). Because there is only a single workload trace provided for BLOOM
 in \cite{ML_Carbon_Bloom_176B_Sasha_Luccioni_arXiv_2022_luccioni2022estimating},
 we follow the data augmentation method in \cite{Carbon_GenerativeAI_AndrewChien_Chicago_HotCarbon_2023_10.1145/3604930.3605705} and distribute the workload trace to the 10 gateways (plus a small perturbation to account for different time zones).
As in \cite{ML_Carbon_Bloom_176B_Sasha_Luccioni_arXiv_2022_luccioni2022estimating},
 we directly quantify the amount of workload using power demand.
We also scale up the workload trace to let the maximum workload
match our data center power capacity as introduced below. 
The 18-day workload trace will be also be extended using data augmentation techniques to evaluate different GLB algorithms
over a longer timescale (Section~\ref{sec:experiment_longer_timescale}).

\subsubsection{Data Centers}\label{sec:simulation_setup_datacenter}

We consider a set of 10 geo-distributed data centers,
including four in the U.S. (Virginia, Georgia, Texas, and Nevada),
four in Europe (Belgium, the Netherlands, Germany, and Denmark),
and two in Asia (Singapore and Japan). These locations
are all a large presence of data centers, including Google's
data centers \cite{ML_CarbonFoorptint_Google_JeffDean_Journal_2022_9810097}.
The details of data center locations are available in the appendix.

 Assuming that there are 10 gateways corresponding
to the 10 data center locations, we consider two scenarios:
(1) \textbf{full GLB flexibility}: the workloads can be flexibly dispatched
from any gateway to any data center;
and (2) \textbf{partial GLB flexibility}: the workloads arriving at a
gateway can only be dispatched to a certain subset of data centers.
As shown in recent studies \cite{Carbon_GenerativeAI_AndrewChien_Chicago_HotCarbon_2023_10.1145/3604930.3605705}, even cross-continent AI workload placement
only marginally increases the end-to-end latency without degrading
service quality.
Thus, the ``full GLB flexibility'' scenario
is already feasible in practice,
whereas the ``partial GLB flexibility'' scenario 
accounts for various other constraints such
as strict latency and bandwidth.

For processing AI inference workloads, we assume that
each data center houses a cluster of 500 
homogeneous servers. Each server is equipped with four
NVIDIA A100 GPUs and has a maximum total power of 2 kW.
Thus, excluding the network switches
and servers for other services beyond the scope
of our study, each data center has a maximum server power of 1 MW 
for AI inference. 

We set the data center PUE as 1.1, which is consistent
with the state-of-the-art PUE value
with efficient operation
\cite{Google_SustainabilityReport_2022,ML_CarbonFoorptint_Google_JeffDean_Journal_2022_9810097}. 
For simplicity, we use the actual carbon footprint and water footprint
to measure the regional environmental impact
(i.e., $\mathcal{H}_{i,c}(x)=x$
and $\mathcal{H}_{i,w}(x)=x$ in \eqref{eqn:objective}).

\subsubsection{Energy Price, Carbon Intensity, and WUE}

We collect hourly energy prices for the 10 data centers over
the same 18-day period as our workload trace. Specifically,
for each data center in Europe and Asia, we collect the hourly 
country-level energy prices from  \cite{Energy_Price_Fuel_Data_International_IEA_website}.
For the U.S. data centers, we collect the hourly
energy prices from their respective ISOs \cite{Water_EnergyData_EIA_Website}.

For each of
the U.S. data centers, we
collect the  
state-level hourly energy fuel mix data \cite{Water_EnergyData_EIA_Website}
and calculate the indirect WUE
based on the fuel mix by following \cite{Gao:2012:EG:2377677.2377719}
and \cite{Shaolei_Water_SpatioTemporal_GLB_TCC_2018_7420641}, respectively.
The carbon intensity and energy water intensity factor (EWIF)
for each fuel mix
are chosen based on \cite{Gao:2012:EG:2377677.2377719} and \cite{Shaolei_Water_AI_Thirsty_arXiv_2023_li2023making}.
We do not have free access to the hourly energy fuel mix
data for our data center locations in Europe and Asia \cite{Energy_Price_Fuel_Data_International_IEA_website}.
Thus, we generate synthetic hourly fuel mixes for these
locations based on the U.S. data. 
Besides, the hourly carbon intensity of each datacenter is obtained from \cite{electric_map}, where the US locations are ISO level and the Europe and Asia locations are country-level carbon intensity. 
The details
are available in the appendix. 

To model the on-site WUE, we assume that the data centers use
cooling towers for heat rejection, which are common in the industry
(even in water-stressed regions like Arizona \cite{Water_DataCenterEnergy_Tradeoff_Arizona_Real_Measurement_WUE_Monthly_2022_KARIMI2022106194}). We collect
the hourly weather data from \cite{Water_WeatherData_Website} 
for the airports closest to each of our data center locations,
and then obtain the wet bulb temperature from the dry bulb temperature
and relative humidity based on \cite{Water_WetBulbTemperature_Meyer2019}.
Next, we calculate the on-site WUE using the empirical
formula in terms of the wet-bulb temperature presented in \cite{Shaolei_Water_SpatioTemporal_GLB_TCC_2018_7420641}.
While assuming cooling towers for rejecting heat into the
outside environment,
our study can be easily adapted to air-side economizers,
which use water for humidity control or
when the outside dry bulb temperature is high \cite{Facebook_Water_2023_meta}.

\begin{table*}[!t]
    %\scriptsize
    \footnotesize
    \caption{Comparison of different GLB algorithms. The metric ratio is the maximum water or carbon footprint divided by the average.
    The results of \ouralg with
    the learning rate $\eta=1.7 \times 10^{-4}$ are bolded. }   \label{table:comparison_all_results}
    \centering
    \begin{tabular}{c|c|c|c|c|c|c|c|c|c|c|c} 
    \toprule
    \textbf{GLB}& \multicolumn{2}{c|}{\multirow{2}{*}{\textbf{Metric}}} & \multicolumn{7}{c}{\textbf{Algorithm}} \\ 
    \cline{4-12}
    \textbf{Flexibility}        & \multicolumn{2}{c|}{}                        &  \costalg       &  \carbonalg     & \wateralg      & \costcarbonalg    & \allalg      & \distancealg         & {\ouralgoffline} & \ouralgmpc & \textbf{\ouralg}\\ 
    \hline
    \multirow{7}{*}{Full}            & \textbf{Energy} (US\$)                     & avg        & 29170 & 47708 & 56184 & 33735 & 32466 & 47038 & 36106 & 36199 &\textbf{37643}\\ 
    \cline{2-12}
    & \multirow{3}{*}{\textbf{Water} ($\text{m}^3$)} & avg      &  1525.1 & 1396.2 & 1243.9 & 1486.3 & 1426.7 & 1446.7 & 1431.8 & 1444.4 &\textbf{1448.7}\\ 
    \cline{3-12}
    &  & max   & 2607.5 & 2671.6 & 2010.4 & 2675.7 & 2358.0 & 2090.9 & 1705.2 & 1898.3 & \textbf{1928.0} \\
    \cline{3-12}
    & & \textbf{max/avg} & 1.71 & 1.91 & 1.62 & 1.80 & 1.65 & 1.45 & 1.19 & 1.31  & \textbf{1.33}\\
    \cline{2-12}
    & \multirow{3}{*}{\textbf{Carbon} (ton)}   & avg  & 118.17 & 90.50 & 103.17 & 100.75 & 105.83 & 111.80 & 102.59 &  105.06 &\textbf{105.70}\\ 
    \cline{3-12}
    &  & max        & 205.10 & 166.79 & 224.49 & 166.82 & 171.50 & 143.46 & 128.79 & 134.68 &\textbf{139.39} \\  
    \cline{3-12}
    & & \textbf{max/avg} & 1.74 & 1.84 & 2.18 & 1.66 & 1.62 & 1.28 & 1.26 & 1.28 & \textbf{1.32} \\
    \hline
    \multirow{7}{*}{Partial}  & \textbf{Energy} (US\$)   & avg &  29659 & 47694 & 53976 & 33729 & 32822 & 47038 & 36013 & 37210 &\textbf{37768} \\  
    \cline{2-12}
    & \multirow{3}{*}{\textbf{Water} ($\text{m}^3$)} & avg   & 1524.1 & 1415.5 & 1249.9 & 1490.4 & 1420.1 & 1446.7 & 1440.6 & 1446.6 & \textbf{1450.1} \\  
    \cline{3-12}
    &   & max  & 2616.1 & 2700.3 & 2028.4 & 2668.8 & 2344.7 & 2090.9 & 1777.3 & 1891.4 &\textbf{1929.0} \\ 
    \cline{3-12}
    & & \textbf{max/avg} & 1.72 & 1.91 & 1.62 & 1.79 & 1.65 & 1.45 & 1.23 & 1.31 & \textbf{1.33}\\
    \cline{2-12}
    & \multirow{3}{*}{\textbf{Carbon} (ton)}   & avg  & 117.52 & 92.22 & 106.07 & 102.13 & 106.53 & 111.80 & 103.31 & 105.11 &\textbf{105.65}\\   
    \cline{3-12}
    &   & max  &  205.77 & 168.42 & 224.49 & 166.44 & 177.43 & 143.46 & 133.45 & 135.42 & \textbf{139.89} \\ 
    \cline{3-12}
     & & \textbf{max/avg} & 1.75 & 1.83 & 2.12 & 1.63 & 1.67 & 1.28 & 1.29 & 1.29 & \textbf{1.32}\\
    \bottomrule
    \end{tabular}
\end{table*}

\subsubsection{Offline and Online Optimization}

Assuming complete knowledge of future information,
we first use offline optimization in order to quantify the maximal potential of equity-aware GLB to address AI's environmental inequity. We then use our online algorithm \ouralg to demonstrate how much of that potential can be realized.  

In the offline case, we consider hourly GLB decisions and use \emph{cvxpy} to solve \eqref{eqn:objective}--\eqref{eqn:constraint_gateway} offline based
on the complete information about all the future workload arrivals,
energy prices, carbon intensity, and WUE values. 
We refer to this offline algorithm as \textbf{\ouralgoffline}.
It takes about 3 minutes on a desktop with Intel i7-9700K CPU and 16GB RAM
to solve the problem for an 18-day simulation in our experiments.
The weight hyperparameters in \eqref{eqn:objective} are set as $\mu_c=1500$ \$/ton and $\mu_w=60$ \$/$\text{m}^3$. Note that these hyperparameters 
are only used to adjust the relative
importance of different cost terms
in the optimization process and do not reflect
the true monetary costs of carbon or water footprints.

In the online case, we use \ouralg to optimize the GLB decisions according to the sequentially 
revealed workload arrivals, energy price, carbon intensity, and water efficiency information. 
It takes around 30 seconds on the same machine to calculate GLB decisions for the 18-day simulation.
{
Besides, we compare \ouralg with another online policy, \ouralgmpc, which leverages model predictive control (MPC). Specifically, \ouralgmpc optimizes the objective in Eqn~\eqref{eqn:objective} over a receding 24-hour horizon, utilizing predictions of future workloads, energy prices, carbon intensities, and water efficiencies.
}
For a fair comparison, the hyperparameters $\mu_c$ and $\mu_w$  by \ouralg and \ouralgmpc are chosen as the same values as the offline optimizer \ouralgoffline. 

\subsubsection{Metrics}\label{sec:experiment_metrics}
We evaluate our equity-aware GLB algorithms using the following metrics:
\textit{average energy cost}, the total energy
cost throughout the 18-day period divided
by 10 data center locations;
\textit{average carbon/water footprint}, the total carbon/water footprint throughout
the 18-day period divided by 10 data center locations;
and the \textit{maximum regional carbon/water footprint}
over the 18-day period among the 10 data center locations.
If scaled up
by a factor of 10, the average value is equivalent
to the total value.  We also include
the maximum regional carbon/water footprint to the average value
to reflect the level of environmental equity, i.e.,
the smaller $max/avg$, the more equitable, and 
 $max/avg=1$ means all the regions have the same environmental cost in terms of the carbon/water footprint.

\subsubsection{Baseline Algorithms}\label{sec:baseline}

We consider the following GLB-related algorithms 
for comparison. 

\begin{itemize}

\item \costalg: This algorithm is based on \cite{RaoLiuXieLiu_2010,Liu:2011:GGL:1993744.1993767,DataCenter_CuttingElectricBill_MIT_Sigcomm_2009_10.1145/1592568.1592584} and only minimizes the total energy cost. It is a special
case of \ouralg by setting $\mu_c=0$ and $\mu_w=0$ in \eqref{eqn:objective}.

\item \carbonalg: Minimization of the total
carbon footprint.

\item \wateralg: Minimization of  the total
water footprint.

\item \costcarbonalg: This algorithm is based on \cite{Gao:2012:EG:2377677.2377719} and minimizes the weighted
sum of the total energy cost and  carbon footprint.

\item \allalg: This algorithm is based on \cite{Shaolei_Water_SpatioTemporal_GLB_TCC_2018_7420641}
and minimizes the weighted
sum of the total energy cost, carbon footprint, and water footprint.

\item\distancealg: This algorithm is a special
case of GLB and directly routes workloads
from each gateway to its nearest data center. It is commonly
used in practice as a default baseline algorithm \cite{Gao:2012:EG:2377677.2377719,DataCenter_CuttingElectricBill_MIT_Sigcomm_2009_10.1145/1592568.1592584}.
\end{itemize}

Without considering equity-related costs, the GLB decisions in these algorithms are not coupled
over time and hence can be optimally obtained online.
 The weights for carbon and water (if applicable)
in \costcarbonalg and \allalg are set such that their respective total carbon
and water footprints are smaller than those of \ouralgoffline.

\begin{figure*}
\centering
\subfloat{
\includegraphics[width=0.8\linewidth]{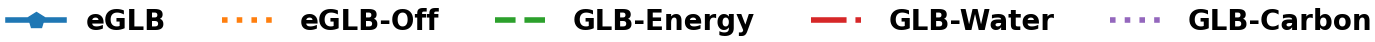}
\label{fig:subfig1}
} \par%
\setcounter{subfigure}{0}
\subfloat[S][Total weighted cost]{
\includegraphics[width=0.32\linewidth]{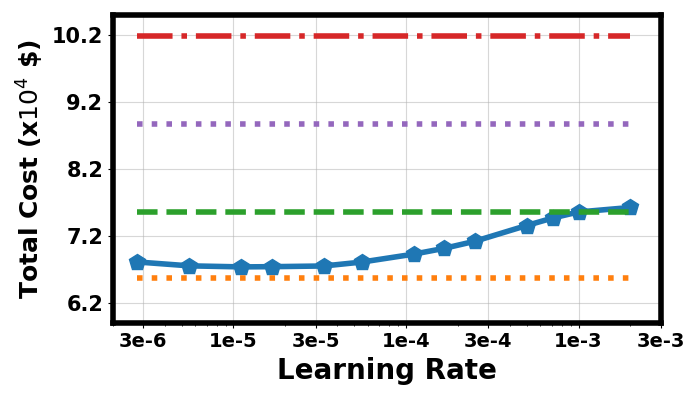}
\label{fig:full_total_cost}
}
\subfloat[S][Average energy cost]{
\includegraphics[width=0.32\linewidth]{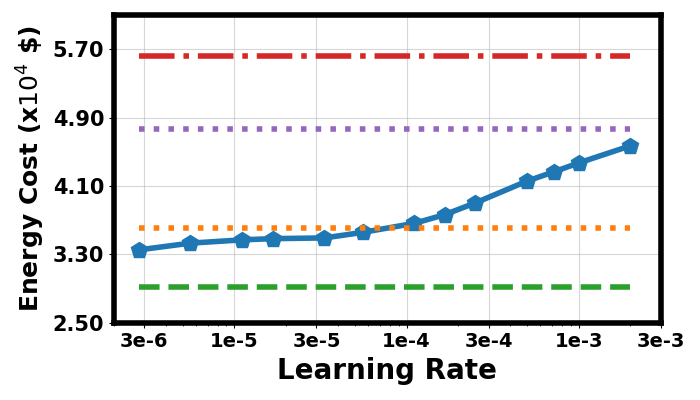}
\label{fig:full_avg_price}
}%
\subfloat[S][Maximum water]{%
\includegraphics[width=0.32\linewidth]{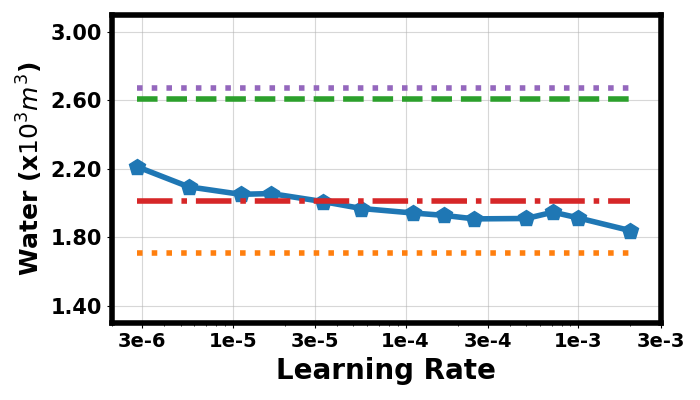}
\label{fig:full_max_water}
}%
\par%
\subfloat[][Average water]{%
\includegraphics[width=0.32\linewidth]{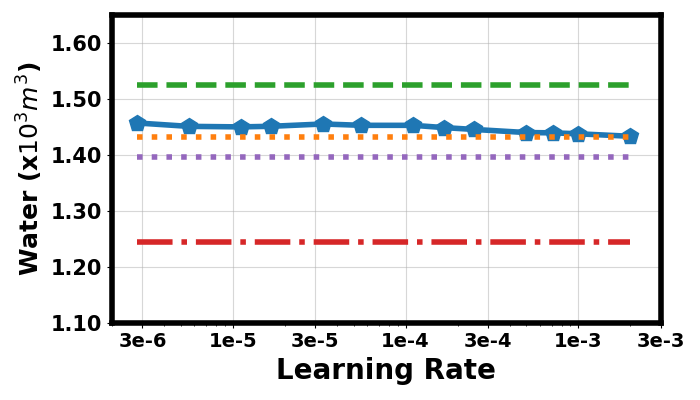}
\label{fig:full_avg_water}
}%
\subfloat[][Maximum carbon]{%
\includegraphics[width=0.32\linewidth]{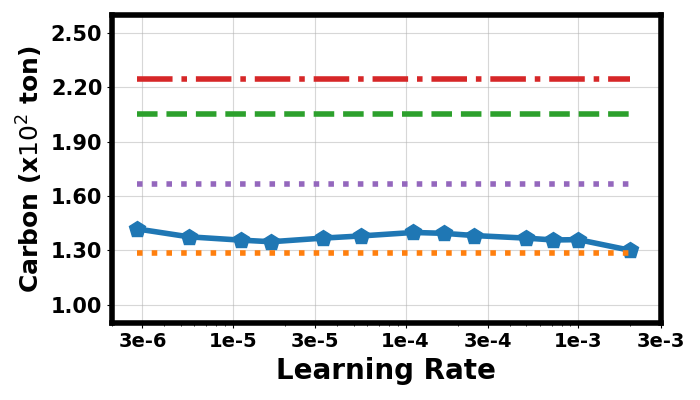}
\label{fig:full_max_carbon}
}%
\subfloat[][Average carbon]{%
\includegraphics[width=0.32\linewidth]{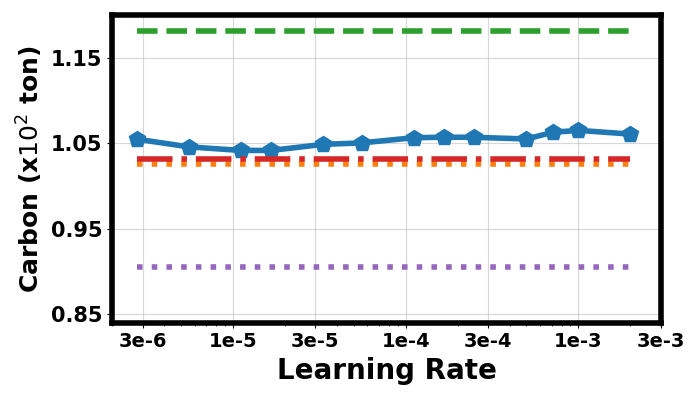}
\label{fig:full_avg_carbon}
}%
\par%
\caption{The energy cost, carbon and water footprint of \ouralg with different learning rates $\eta$ under  full GLB flexibility. The
results for \ouralgoffline, \costalg, \carbonalg,
and \wateralg are shown for comparison.}\label{fig:comparison_full}
\end{figure*}%

\subsection{Results}\label{sec:results}
We show our results in Table~\ref{table:comparison_all_results} by considering two different scenarios: GLB with full or partial flexibility.
Our results highlight that 
\ouralg can improve AI's environmental equity 
by reducing the environmental impact on the most disadvantaged region while still keeping the average environmental footprint and energy cost close to or even lower than those of alternative GLB algorithms.
In addition, 
the empirical performance of \ouralg is
close to the optimal offline algorithm \ouralgoffline.
Next, we discuss our results in detail.

\subsubsection{GLB with Full Flexibility}

We first consider the full-flexibility scenario
in which the workloads can be dispatched to any data center.
Among all the algorithms, \ouralgoffline has the lowest carbon and water
footprints for the most disadvantaged regions with complete information of workload, carbon and water footprints. 
Meanwhile, the average energy cost, carbon footprint, and water footprint
of \ouralgoffline are comparable to or even lower than the other GLB algorithms. 
Thus, \ouralgoffline has almost
the lowest ``\emph{maximum to average}'' ratio in terms
of both the carbon footprint and water footprint,
effectively reducing the regional disparity and improving environmental equity.

Interestingly, while \costalg, \carbonalg, and \wateralg can minimize
the total energy cost, carbon footprint, and water footprint, respectively,
they amplify the environmental inequity compared to \distancealg.
This is due to the inequity \emph{unawareness} of these algorithms ---
their aggressive exploitation of certain regions may come at the
cost of harming these regions in terms of environmental impacts. For example, \costalg exploits the cheaper
energy price of Texas by assigning more workloads
to this region, but this can result in
a disproportionately high environmental
footprint in Texas due to its
 worse carbon intensity and/or WUE than some other regions. 
While \costcarbonalg and \allalg can balance the energy cost and environmental footprints in terms of the average/total metric, 
they can still result
in disproportionately high environmental burdens on the already-disadvantaged regions
due to the unawareness of equity. This is similar
to algorithmic unfairness against disadvantaged individuals or user groups caused by an AI model that purely minimizes
the average loss \cite{Fair_MinimaxPareto_Fairness_ICML_2020_10.5555/3524938.3525565,Fair_MiniMaxGroupFairness_AaronRoth_Amazon_2021_Diana2021_10.1145/3461702.3462523}.

While the prior studies \cite{Gao:2012:EG:2377677.2377719,Shaolei_Water_SpatioTemporal_GLB_TCC_2018_7420641} have demonstrated that the  total carbon
footprint and water footprint are often in tension with the energy cost,
our results further add that environmental equity may not
be cost-free either. Nonetheless, by balancing the energy
cost and environmental equity as formulated in \eqref{eqn:objective},
the price we pay for equity can be reasonably low.

\textbf{\ouralg vs. \ouralgoffline}. In comparison to the offline optimizer, \ouralg only has access to online causal information. This naturally leads to worse performance than if full information were available; however, the carbon and water footprints for the most disadvantaged regions in \ouralg are still better than all other GLB algorithms. 
In Table~\ref{table:comparison_all_results}, the maximum/average carbon and water footprints of \ouralg are mostly within $10\%$ of the offline optimal solutions from \ouralgoffline, which demonstrates the empirical effectiveness of \ouralg and complements our theoretical analysis in Theorem~\ref{thm:cost_bound}.

\textbf{\ouralg vs. \ouralgmpc}. Unlike the offline optimal policy \ouralgoffline, \ouralgmpc is limited to future information within the 
next 24-hour prediction window. This limitation reflects real-world constraints, where perfect foresight of the entire future is unavailable.  As expected, \ouralgmpc outperforms \ouralg (with sequentially revealed information) in
terms of the average energy cost, water footprint, and carbon footprint.
Interestingly, the maximum-to-average ratios for carbon and water footprints are similar between \ouralgmpc and \ouralg, despite \ouralgmpc's 24-hour prediction window. This observation emphasizes the fundamental challenge of achieving long-term environmental equity when faced with limited predictions of the future. 

\begin{figure*}[!t]
\centering
\setcounter{subfigure}{0}
\subfloat[S][Average energy cost]{
\includegraphics[width=0.19\linewidth]{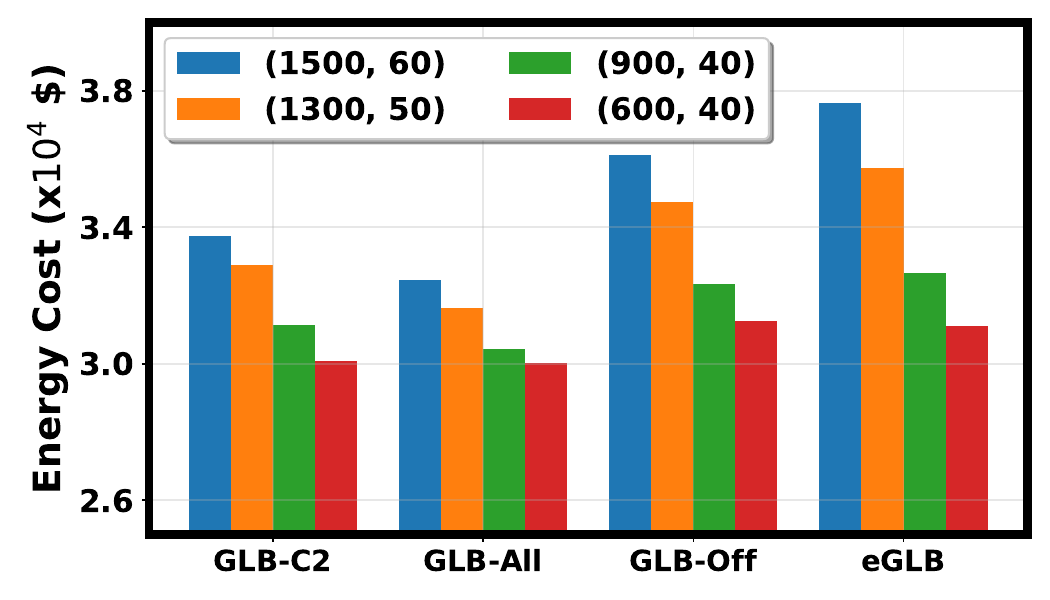}

}%
\subfloat[S][Maximum water]{%
\includegraphics[width=0.19\linewidth]{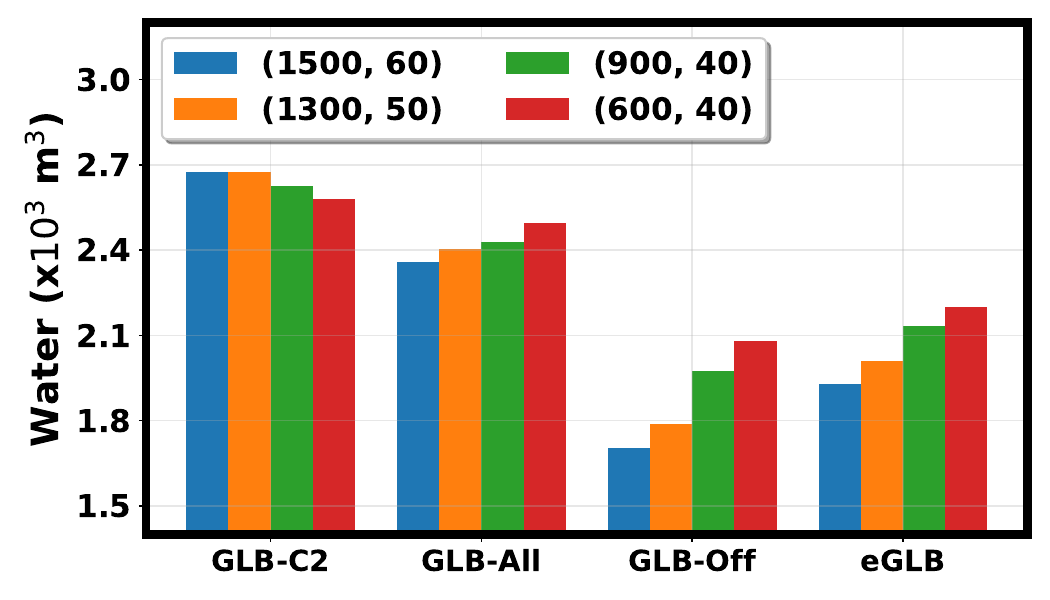}

}%
\subfloat[][Maximum carbon]{%
\includegraphics[width=0.19\linewidth]{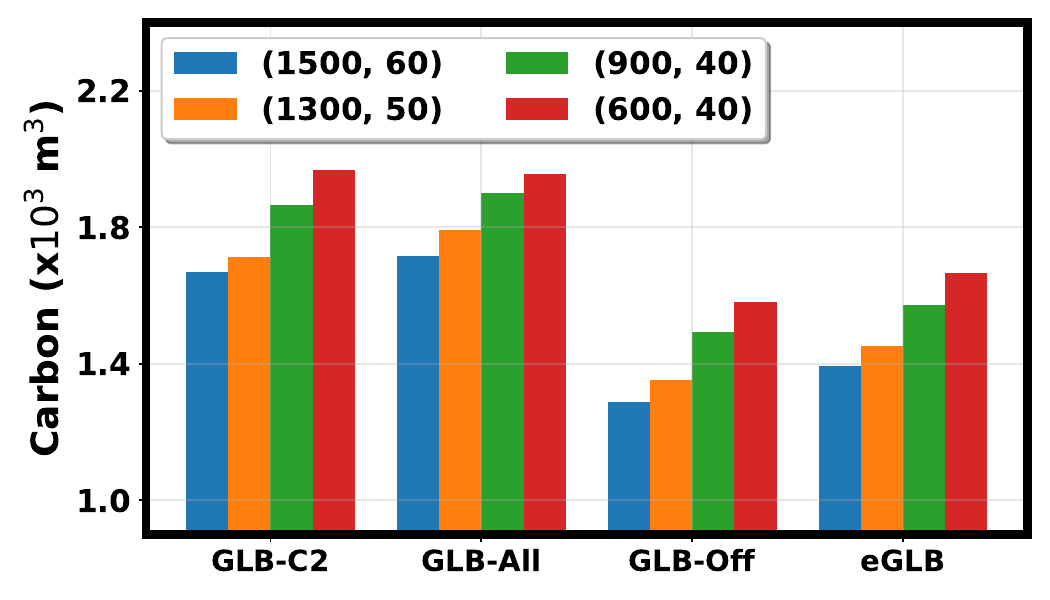}
}%
\subfloat[S][Max/avg  for water]{%
\includegraphics[width=0.19\linewidth]{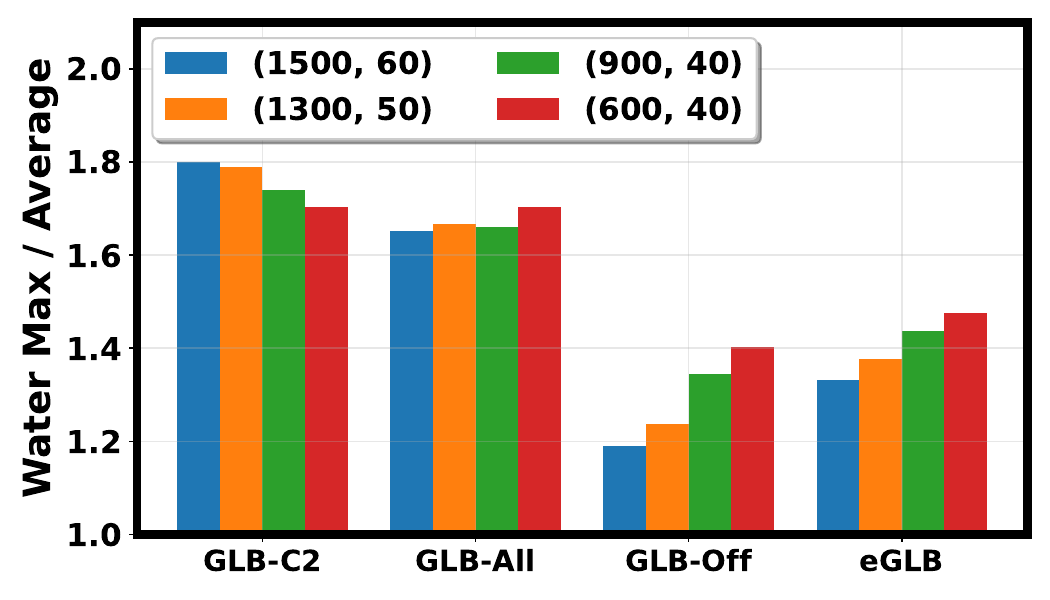}

}%
\subfloat[][Max/avg for  carbon]{%
\includegraphics[width=0.19\linewidth]{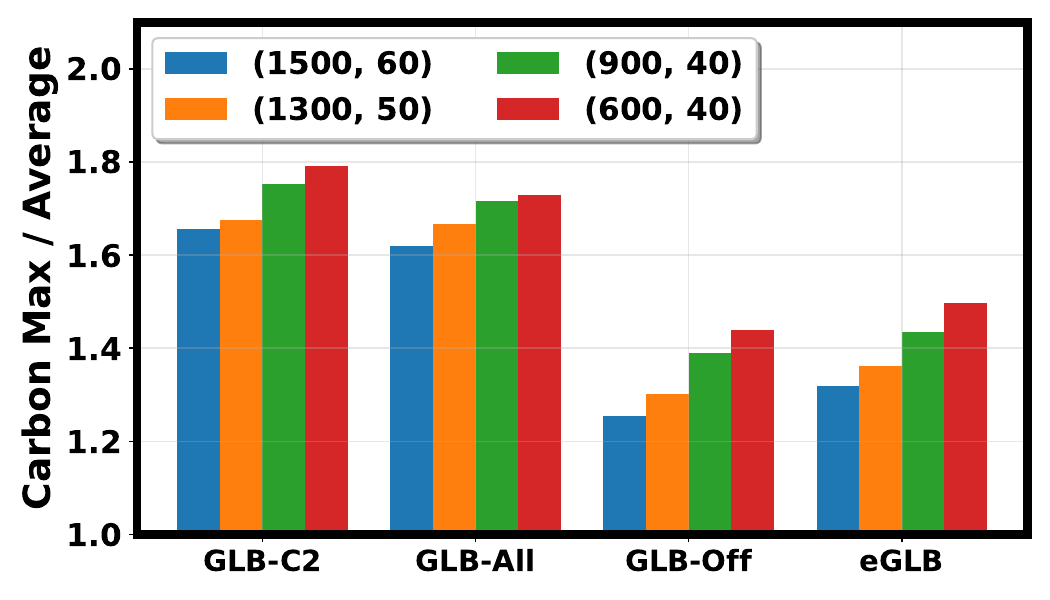}
}
\par%
\caption{The energy cost, carbon footprint, and water footprint of \ouralg with different $(\mu_w,\mu_c)$ shown in the legend under full GLB flexibility. The results for \ouralgoffline, \costalg, \carbonalg,
and \wateralg are also shown for comparison.}\label{fig:compare_weight}
\end{figure*}

\begin{table*}[!t]
    
    \footnotesize
    \caption{Comparison of different GLB algorithms. The default workload trace is augmented to 180 days to evaluate the long-term impact of different GLB algorithms. The results of \ouralg with
    the learning rate $\eta=1.7 \times 10^{-4}$ are bolded.}   \label{table:long_sequence}
    \centering
    \begin{tabular}{c |c|c|c|c|c|c|c|c|c|c} 
    \toprule
    \textbf{GLB} & \multicolumn{2}{c|}{\multirow{2}{*}{\textbf{Metric}}} & \multicolumn{7}{c}{\textbf{Algorithm}} \\ 
    \cline{4-11}
    \textbf{Flexibility}     & \multicolumn{2}{c|}{}                        &  \costalg       &  \carbonalg     & \wateralg      & \costcarbonalg    & \allalg      & \distancealg         & {\ouralgoffline} & \textbf{\ouralg}\\ 
    \hline
     \multirow{7}{*}{Full}  & \textbf{Energy} (US\$)                     & avg        & 279620 & 454608 & 539847 & 326104 & 312372 & 450992 & 341998 & \textbf{359433}\\ 
    \cline{2-11}
    & \multirow{3}{*}{\textbf{Water} ($\text{m}^3$)} & avg      &  14329.6 & 12992.8 & 11694.2 & 13822.4 & 13338.4 & 13584.9 & 13439.3 & \textbf{13591.5}\\ 
    \cline{3-11}
    &  & max   & 23753.4 & 24779.5 & 19478.0 & 25154.2 & 21307.6 & 19662.3 & 16339.6 & \textbf{18199.0} \\
    \cline{3-11}
    & & \textbf{max/avg} & 1.66 & 1.91 & 1.67 & 1.82 & 1.60 & 1.45 & 1.22 & \textbf{1.34}\\
    \cline{2-11}
    & \multirow{3}{*}{\textbf{Carbon} (ton)}   & avg  & 1098.29 & 830.66 & 947.89 & 925.28 & 975.76 & 1035.97 & 951.91 & \textbf{977.92}\\ 
    \cline{3-11}
    &  & max        & 1868.37 & 1544.89 & 2110.61 & 1566.99 & 1656.06 & 1342.44 & 1202.91 & \textbf{1294.23} \\  
    \cline{3-11}
    & & \textbf{max/avg} & 1.70 & 1.86 & 2.23 & 1.69 & 1.70 & 1.30 & 1.26 & \textbf{1.32} \\
    \hline
    \multirow{7}{*}{Partial}  & \textbf{Energy} (US\$)                     & avg        & 283551 & 456028 & 516966 & 324368 & 314254 & 450992 & 342498 & \textbf{359510} \\ 
    \cline{2-11}
    & \multirow{3}{*}{\textbf{Water} ($\text{m}^3$)} & avg      &  14312.2 & 13249.7 & 11755.2 & 13903.6 & 13298.1 & 13584.9 & 13508.3 & \textbf{13619.4} \\ 
    \cline{3-11}
    &  & max   & 23961.6 & 25347.0 & 18852.3 & 24833.9 & 21734.6 & 19662.3 & 16824.7 & \textbf{18267.1}\\
    \cline{3-11}
    & & \textbf{max/avg} & 1.67 & 1.91 & 1.60 & 1.79 & 1.63 & 1.45 & 1.25 & \textbf{1.34}  \\
    \cline{2-11}
    & \multirow{3}{*}{\textbf{Carbon} (ton)}   & avg  & 1093.61 & 850.24 & 981.20 & 942.91 & 986.54 & 1035.97 & 964.03 & \textbf{980.46} \\ 
    \cline{3-11}
    &  & max  & 1874.35 & 1577.51 & 2110.77 & 1545.47 & 1695.69 & 1342.44 & 1240.80 & \textbf{1301.70}\\  
    \cline{3-11}
    & & \textbf{max/avg} &  1.71 & 1.86 & 2.15 & 1.64 & 1.72 & 1.30 & 1.29 & \textbf{1.33} \\
    \bottomrule
    \end{tabular}
\end{table*}

 As shown in Theorem~\ref{thm:cost_bound}, $\eta$ is an important parameter that determines the cost gap between \ouralg and its offline version \ouralgoffline. 
Thus, we also evaluate the impact of the learning rate $\eta$ in Fig.~\ref{fig:comparison_full}.  
The total cost is calculated by summing up the energy cost and 
equity-related carbon/water costs (weighted by $\mu_c$
and $\mu_w$, respectively).
Then, in line with  Section~\ref{sec:experiment_metrics},
we divide the total cost by 10 data center locations and
show it in Fig.~\ref{fig:full_total_cost}. 
As we increase the learning rate $\eta$, the total cost first decreases and then increases as suggested by Theorem~\ref{thm:cost_bound}.
Empirically, the optimal learning rate $\eta$ is around $3\times 10^{-5}$ 
in our setting. In Fig.~\ref{fig:full_avg_price}, the average energy 
cost increases as we increase $\eta$, since larger $\eta$ leads 
to more aggressive updates of the Lagrange multiplier $\kappa$. 
The Lagrange multiplier $\kappa$ is 
used to penalize the cost objective according to the environmental footprint, which means larger $\kappa$ shifts the objective function more towards environmental equity, as compared with purely minimizing the energy cost. Similarly, as we increase the learning rate $\eta$, the carbon and water footprints for the most disadvantaged regions decrease, as shown in Fig~\ref{fig:full_max_water} and~\ref{fig:full_max_carbon}. The underlying reason is similar --- a larger learning rate $\eta$ updates $\kappa$ more rapidly, eventually leading to more attention to equity-related costs. Interestingly, the average carbon and water footprints of \ouralg are very close to the offline version, \ouralgoffline. Like in the task of machine learning training, the learning rate hyperparemeter
can be tuned based on a validation dataset to get the desired performance
in practice.

\subsubsection{GLB with Partial Flexibility}
Now, we consider the partial-flexibility scenario in
which intra-continental workload routing is fully
flexible but inter-continental workload routing is partially restricted.
Specifically, 
we only allow partial inter-continental workload routing 
 as follows: workloads can be flexibly routed
 between Asia and the western U.S. (Nevada),
 and between Europe and the eastern U.S. (Virginia and Georgia). 

Our results are similar to those in the full-flexibility scenario.
Specifically, while the inter-continental workload routing restriction limits the GLB decision space,
\ouralgoffline still has the lowest carbon and water
footprints for the most disadvantaged regions.
Meanwhile, the average energy cost, carbon footprint, and water footprint
of \ouralgoffline are all comparable to or even lower than 
the other GLB algorithms. Thus, even without full flexibility, \ouralgoffline demonstrates a great potential
to address AI's environmental inequity in today's
geographically distributed data center infrastructures. 
Additionally, as shown in Table~\ref{table:comparison_all_results}, the performance of \ouralg is very close to its offline counterpart, \ouralgoffline.

\distancealg does not route workloads across data centers and
hence is not affected by the partial GLB flexibility.
Interestingly, the result of \carbonalg is not affected
by  the inter-continental workload routing restriction
in our setting, because the workloads
from each continent can be processed
in at least one low-carbon
data center in our setup (see Table~\ref{table:datacenter_details}).

With partial GLB flexibility, the impact of the learning rate
$\kappa$ is similar as that with full GLB flexibility. More details
about the empirical results  can be found in Appendix~\ref{sec:app_glb_partial}.

\subsubsection{GLB with Different $\mu_c$ and $\mu_w$}

Adjusting the weight hyperparameters $\mu_c$ and $\mu_w$ allows us to control the relative importance of the energy cost and environmental equity.  
Here,  by using the default setup, 
we evaluate how the weights for carbon and water footprints
impact the performance of different GLB algorithms.
We show the results in in Fig.~\ref{fig:compare_weight}. We only compare the performance of \costcarbonalg, \allalg, \ouralgoffline and \ouralg, as the other GLB algorithms are not affected by these weight hyperparameters.
Naturally,
by assigning lower weights to carbon and/or footprints, the emphasis on reducing the environmental inequity in terms of these footprints is lessened, allowing all the four GLB algorithms to have a reduced energy cost. However, this also
results in a higher maximum carbon and/or water footprint (as well as higher maximum-to-average ratios) for these algorithms, with the only exception being \costcarbonalg. 
More specifically, unlike the other algorithms, \costcarbonalg minimizes
the weighted sum of the energy cost and carbon footprint without
accounting for the impact of water footprint. As a result, it has a higher (maximum) water footprint as we increase the weight of carbon for reducing the carbon footprint. This empirical finding suggests that the goals of reducing carbon emission and water usage may not be aligned, or even in opposition, necessitating a joint optimization of their combined weighted sum. 

While \ouralg applies with any  $\mu_c\geq0$ and $\mu_w\geq0$, it is up to the AI system operator to tune weight hyperparameters
(e.g., based on  validation dataset) to achieve a desired outcome.
This is also a common and standard practice in real systems (see Google's dynamic capacity planning to balance the energy cost and environmental impacts \cite{Google_CarbonAwareComputing_PowerSystems_2023_9770383}).

\subsubsection{Evaluation over a Longer Timescale}\label{sec:experiment_longer_timescale}

The open-source BLOOM inference trace is only for 18 days \cite{ML_Carbon_Bloom_176B_Sasha_Luccioni_arXiv_2022_luccioni2022estimating} and used in our default setup. Due to limited availability
of public data, we extend the 18-day BLOOM inference trace
to 180 days by using data augmentation techniques to evaluate
the impacts of \ouralg in terms of environmental equity over a longer timescale. More specifically, we add 25\% random perturbations
and append the perturbed workload trace to the original one
to construct a 180-day trace. 
The results are shown in Table~\ref{table:long_sequence},
offering similar insights as in the default case for both full and partial GLB scenarios.
We can see that compared to the other equity-oblivious GLB algorithms, \ouralg can effectively reduce
the environmental inequity among different regions in terms
of the maximum-to-average ratio for both carbon and water footprints.
Even compared to \ouralgoffline, \ouralg delivers a similar performance
in terms of environmental equity while only marginally increasing
the total energy cost, which demonstrates the potential of \ouralg 
to address AI's emerging environmental inequity in practice without knowing all the future information.

\section{Related Work}

Our work is the first to address the critical concern
of AI's emerging environmental inequity by leveraging
GLB, and contributes to the GLB literature for cloud computing and
data centers
\cite{Carbon_ChasingCarbon_DataCenter_CaroleWu_IEEE_Micro_2022_10.1109/MM.2022.3163226,Carbon_CarbonExplorer_HolisticGreenDataCenter_CaroleWu_BenjaminLee_ASPLOS_2023_10.1145/3575693.3575754,Google_CarbonAwareComputing_PowerSystems_2023_9770383,RenWangUrgaonkarSivsubramaniam_Carbon_Mascots,UMass_CloudSpotOnlineServices_HPDC_2015_He:2015:CCH:2749246.2749275,Carbon_SustainableClouds_VirtualizingEnergy_DavidIrwin_AdamWierman_SoCC_2021_10.1145/3472883.3487009,UMass_CostRenewable_CDN_ICAC_2015_lee2015cost,Minghua_eEnergy_2014_Camacho:2014:BYB:2602044.2602068,WeiZhangOSU_SUSCOM_14,RaoLiuXieLiu_2010,Liu:2011:GGL:1993744.1993767,Gao:2012:EG:2377677.2377719,DataCenter_CuttingElectricBill_MIT_Sigcomm_2009_10.1145/1592568.1592584,Shaolei_Water_SpatioTemporal_GLB_TCC_2018_7420641,Le:2011:REC:2063384.2063413}. Specifically, prior studies focus
on reducing the total energy cost, carbon footprint,
  water footprint, or a weighted combination of these metrics;
 ignoring the potential for regional
 disparities. We show in this paper that existing GLB algorithms can potentially amplify environmental inequity by further exploiting already vulnerable regions. For example, GLB algorithms that aggressively
exploit lower electricity prices \cite{RaoLiuXieLiu_2010,DataCenter_CuttingElectricBill_MIT_Sigcomm_2009_10.1145/1592568.1592584} and/or more renewables \cite{Liu:2011:GGL:1993744.1993767,Gao:2012:EG:2377677.2377719}
may schedule more workloads to data centers (located in, for example, Arizona) that are extremely water-stressed; thus adding a disproportionately high pressure to local water systems.

Sustainable AI has received a significant amount of attention in recent years
\cite{ML_GPT3_Energy_Others_NIPS_2020_NEURIPS2020_1457c0d6,ML_LaMDA_Lanugage_Google_arXiv_2022_thoppilan2022lamda,ML_CarbonFoorptint_Google_JeffDean_Journal_2022_9810097,ML_Carbon_LargeModelTraining_Google_arXiv_2021_patterson2021carbon,GreenAI_EnergyPolicy_NLP_UMass_ACL_2019_strubell-etal-2019-energy,GreenAI_Washington_ACM_2020_10.1145/3381831,GreenAI_ReportingEnergyCarboon_Stanford_JMLR_2020_10.5555/3455716.3455964}.
To make AI more energy-efficient
and sustainable, a variety of approaches have been explored and studied, 
including
 computationally efficient
training and inference \cite{DNN_DeepSpeed_YuxiongHe_MSR_ICML_2022_rajbhandari2022deepspeed,ML_FrugalGPT_Cost_JamesZou_Stanford_arXiv_2023_chen2023frugalgpt}, 
energy-efficient GPU and accelerator designs \cite{DNN_AutoDNNChip_FPGA_ASIC_YingyanLin_Rice_DemingChen_UIUC_FPGA_2020_10.1145/3373087.3375306,DNN_NAS_AcceleratorAware_AutoML_SimulationPredictor_gupta2020accelerator,ML_CarbonFoorptint_Google_JeffDean_Journal_2022_9810097}, 
carbon-aware task scheduling \cite{GreenAI_ReportingEnergyCarboon_Stanford_JMLR_2020_10.5555/3455716.3455964,Carbon_SustainbleAI_CaroleWu_MLSys_2022_wu2022sustainable}, 
green cloud infrastructures \cite{ChaoLi_iSwitch_ISCA_2012_Li:2012:ICO:2337159.2337218,Carbon_MetricsSustainability_AnshulGandhi_StonyBrook_HotCarbon_2022,Carbon_SustainableClouds_VirtualizingEnergy_DavidIrwin_AdamWierman_SoCC_2021_10.1145/3472883.3487009,Carbon_CarbonExplorer_HolisticGreenDataCenter_CaroleWu_BenjaminLee_ASPLOS_2023_10.1145/3575693.3575754}, 
among others. 
While they are useful for overall sustainability, 
these studies do not address the emerging environmental equity 
among different regions for deploying AI services.
 Additionally, they have mostly focused
on carbon footprint, neglecting other crucial environmental
footprints, e.g., water footprint \cite{Google_SustainabilityReport_2022,Facebook_SustainabilityReport_2021,Microsoft_Water_Cloud_Sustainability_2023,Shaolei_Water_AI_Thirsty_arXiv_2023_li2023making}.
In contrast, we holistically consider both
carbon and water footprints and make novel contributions
to  sustainable AI from the perspective
of environmental equity.

There also exist non-\emph{computational} approaches to improving
AI's environmental sustainability. For example, data center operators have increasingly
adopted carbon-free energy such as solar and wind power
to lower AI's
carbon emissions \cite{Carbon_SustainbleAI_CaroleWu_MLSys_2022_wu2022sustainable,Google_SustainabilityReport_2022,Facebook_SustainabilityReport_2021,Microsoft_Water_Cloud_Sustainability_2023}. To cut
on-site potable water consumption and mitigate
the stress on already-limited freshwater resources, climate-conscious cooling
system designs (e.g.,  air-side economizers
and purifying non-potable water) have recently seen an uptick in the data center industry \cite{Google_Water_Commitments_website,Facebook_Water_2023_meta}.
These non-computational approaches alone
are typically not the most effective solution
to sustainable AI, and must be designed in conjunction with 
computational approaches (e.g., workload scheduling) 
\cite{Microsoft_CarbonAwareComputing_Treehouse_HotCarbon_202255728,Carbon_CarbonExplorer_HolisticGreenDataCenter_CaroleWu_BenjaminLee_ASPLOS_2023_10.1145/3575693.3575754,Microsoft_CarbonAwareSoftware_Whitepaper_2023,Google_CarbonAwareComputing_PowerSystems_2023_9770383}.
As such, our study of equity-aware GLB 
can  inform the planning of on-site carbon-free energy and 
cooling system renovation projects
to better achieve social and environmental justice.

Equity and fairness are crucial considerations for AI. 
The existing research in this space has predominantly
focused on mitigating prediction unfairness against disadvantaged
individuals and/or groups under a variety of settings \cite{Fair_ML_Survey_ACM_2022_10.1145/3494672,Fair_qFed_FederatedLearning_CMU_ICLR_2020_Li2020Fair,Fair_Representatoin_Toronto_ICML_2013_pmlr-v28-zemel13,Fair_BiasComputerSystems_ACM_Trans_1996_10.1145/230538.230561,Fair_MinimaxPareto_Fairness_ICML_2020_10.5555/3524938.3525565,Fair_Discrimination_DataMining_KDD_2008_10.1145/1401890.1401959,Fair_ConflictIndividual_Group_FAccT_2020_10.1145/3351095.3372864,Fair_SequentialDecision_Survey_MingyanLiu_UMich_Handbook_2021_Zhang2021,Fair_FairnessThroughAwareness_EarlyWork_ITCS_2012_10.1145/2090236.2090255,Fair_TestingPublicThreeDefinitions_LoanAllocation_Journal_YangLiu_UCSC_2020_SAXENA2020103238,Fair_FastFair_BiasMitigation_GNN_YanningShen_UCI_TMLR_2023_kose2023fastfair,Fair_MiniMaxGroupFairness_AaronRoth_Amazon_2021_Diana2021_10.1145/3461702.3462523}.
Our work on environmental equity adds a unique dimension of fairness
and greatly complements the existing rich body of research,
collaboratively and holistically building equitable and socially-responsible AI.

\section{Concluding Remarks}

In this paper, we take a first step to address 
the emerging environmental inequity of AI by 
balancing its regional negative environmental impact in an equitable
manner. 
Concretely,
we focus on the carbon and water footprints of AI model 
inference and propose equity-aware 
 GLB
to explicitly address the  environmental impact on the most disadvantaged region. \ouralg can optimize GLB decisions to fairly balance
AI's environmental cost across different regions in an online manner.
%We also 
%consider more advanced settings where there is on-site carbon-free energy
%available to power the AI workloads and where
%we can further exploit AI's energy-accuracy flexibility 
 %by dynamically choosing one or more AI models to serve the workloads.
We run trace-based simulations by considering a set of 10 geographically distributed data centers that serve inference requests for a large language AI model. The results highlight
that, compared to the existing GLB approaches,
our proposed equity-aware GLB can significantly  reduce
the regional disparity in terms of AI's carbon and water footprints. 

Our work demonstrates the need and great potential of equity-aware GLB
to address AI's emerging environmental equity. 
%An important future research
%problem is how to design an efficient online GLB algorithm
%to realize the potential in practice. This is a challenging
%problem that has not been well studied by the prior literature
%on GLB or online
%optimization. The key technical challenge is that reducing
%AI's long-term environmental impact on the most disadvantaged region
%(i.e., minimax in \eqref{eqn:objective}) requires all
%the future information, such as future AI workloads,
%which is only revealed sequentially in practice.
%Additionally, 
%our work also 
It opens up multiple new research directions to further
improve AI's environmental equity, such
as how to jointly optimize GLB and non-IT resource
(e.g., batteries) management
and how to leverage environmental science tools
to quantify
the impact of AI's carbon and water footprints on each region's
ecological system.

\section*{Acknowledgement}

Pengfei Li, Jianyi Yang, and Shaolei Ren were supported in part by the NSF under grants
CNS-1910208 and CCF-2324941. Adam Wierman
was supported in part by the NSF under
grants CNS-2146814, CPS-2136197, CNS-2106403 and NGSDI-2105648,
and by the Resnick Sustainability Institute.

{
      \bibliographystyle{unsrt}
        \bibliography{main_eenergy_2024.bbl}
}

\appendix

\begin{table*}[!ht]
\centering
\footnotesize
\caption{The detailed information of our data center locations. The 
values shown in the table are averaged over the 18-day period between September 23 and October 11, 2022. } 
\label{table:datacenter_details}
\begin{tabular}{c|c|c|c|c|c} 
\toprule
\multirow{2}{*}{\textbf{Country}}  & 
{\multirow{2}{*}{\textbf{State/Province}}}& 
{\multirow{2}{*}{\textbf{City}}}  & 
\textbf{Total WUE}  & 
\textbf{Carbon Intensity} & 
\textbf{Energy Price}  \\ 
& & & ($\text{m}^3$/MWh) & (ton/MWh)& (\$/MWh)\\
\hline
U.S. & Texas   & Midlothian  & 5.7397   & 0.4011 & 64.931 \\ 
\hline
U.S. & Virginia& Loudoun  & 5.9755  & 0.3741 & 77.793 \\ 
\hline
U.S. & Georgia & Douglas  & 5.9001  & 0.4188 & 80.566 \\ 
\hline
U.S. & Nevada  & Storey  & 4.9306   & 0.2980 & 84.738 \\ 
\hline
Germany  & Hessen  & Frankfurt & 4.5889   & 0.3295 & 315.233\\ 
\hline
Belgium  & Hainaut & Saint-Ghislain & 4.9316   & 0.4802 & 247.083\\ 
\hline
Netherlands & Groningen  & Eemshaven& 3.0928   & 0.4454 & 248.258\\ 
\hline
Denmark  & Fredericia & Fredericia  & 3.8900   & 0.1391 & 213.773\\ 
\hline
Japan & Chiba Prefecture & Inzai & 2.4989  & 0.3280 & 129.269\\ 
\hline
Singapore& Singapore  & Jurong West & 5.8652   & 0.5260 & 155.462\\
\bottomrule
\end{tabular}
\end{table*}
\begin{figure*}[!ht]
\centering
\subfloat{
\includegraphics[width=0.8\linewidth]{figures/legend_caption.png}
\label{fig:legend_1}
} \par%
\setcounter{subfigure}{0}
\subfloat[S][Total weighted cost]{
\includegraphics[width=0.32\linewidth]{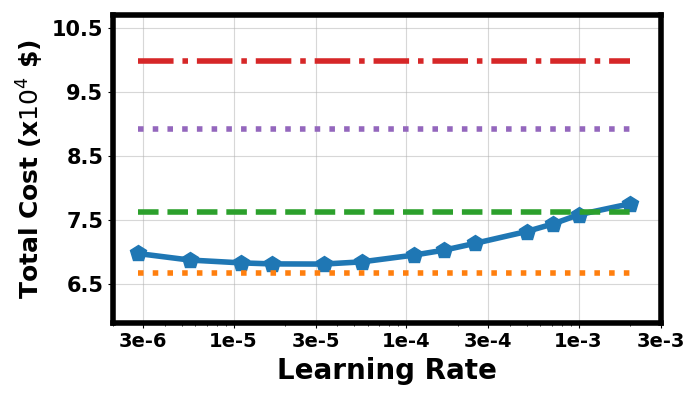}

}
\subfloat[S][Average energy cost]{
\includegraphics[width=0.32\linewidth]{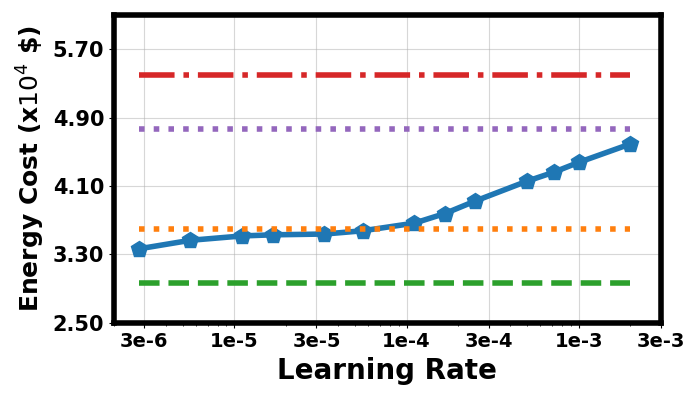}

}%
\subfloat[S][Maximum water]{%
\includegraphics[width=0.32\linewidth]{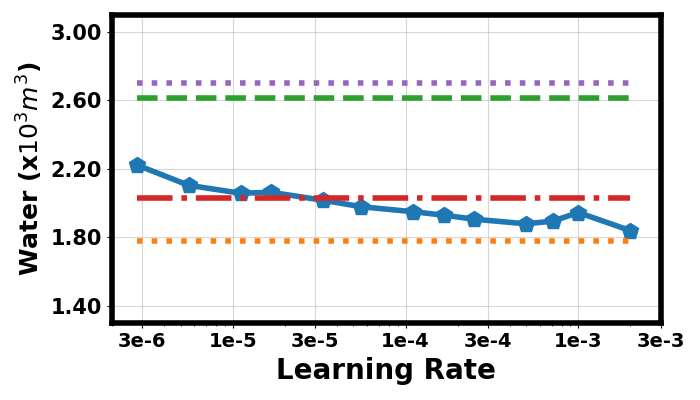}

}%
\par%
\subfloat[][Average water]{%
\includegraphics[width=0.32\linewidth]{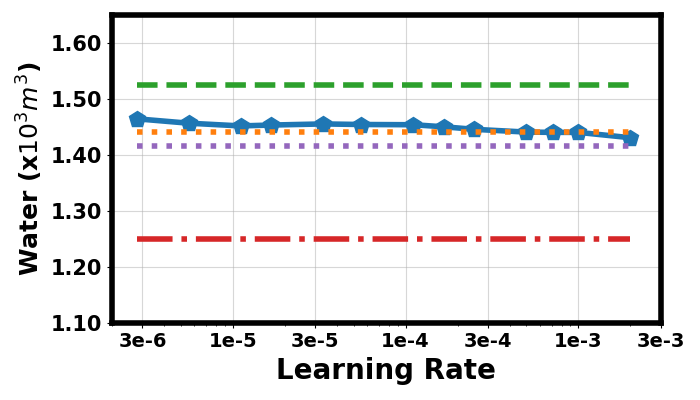}

}%
\subfloat[][Maximum carbon]{%
\includegraphics[width=0.32\linewidth]{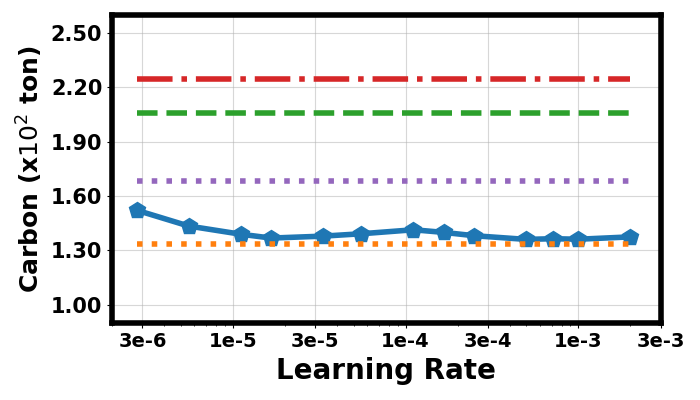}

}%
\subfloat[][Average carbon]{%
\includegraphics[width=0.32\linewidth]{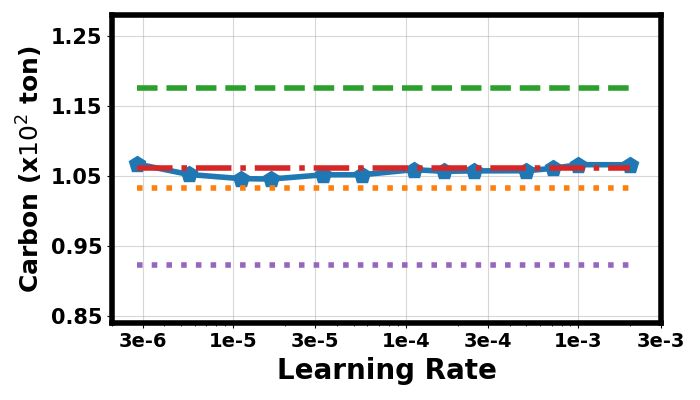}

}%
\par%
\caption{The energy cost, carbon and water footprint of \ouralg with different learning rates $\eta$ under partial GLB flexibility. The
results for \ouralgoffline, \costalg, \carbonalg,
and \wateralg are shown for comparison.}\label{fig:partial_GLB_results}
\end{figure*}

\section*{Appendix}

\section{Additional Details about the Experimental Setup}

In this section, we provide additional details about the data used in the experiments,
particularly the sources of our data on the fuel mix and water intensity factors.

We do not have free access to the hourly energy fuel mix
data for our data center locations in Europe and Asia.
Thus, we generate synthetic hourly fuel mixes for these
locations based on the U.S. data. 
We first obtain from \cite{Energy_Price_Fuel_Data_International_IEA_website} the average percentages of renewable and non-renewable energy in electricity generation between September 23 and October 11, 2022,
for each data center location in Europe and
Asia. Then, we scale the hourly energy fuel mix data in the U.S. to match the average percentages by mapping Texas' fuel mixes between June 1 and June 19, 2022, to Germany with non-renewable energy fuels 
scaled by 0.8503, Georgia's fuel mixes between June 1 and June 19, 2022, to Belgium with non-renewable energy fuels scaled by 1.5319, Georgia's fuel mixes between March 1 and March 19, 2022, to the Netherlands with non-renewable energy fuels
scaled by 1.2759, Oregon's fuel mixes between July 1 and July 19, 2022, to Denmark with non-renewable energy scaled by 0.2657, Nevada's fuel mixes between March 1 and March 19, 2022, to Japan with non-renewable energy fuels scaled by 3.2374, Georgia's fuel mixes between May 1 and May 19, 2022, to Singapore with non-renewable energy fuels scaled by 4.4875. We choose different 18-day periods
in order to de-correlate the European and Asian energy fuel
mix traces
from our actual U.S. data over the workload trace period  (between September 23
and October 11, 2022).

We also show the estimated energy water intensity factor (EWIF) in $\text{m}^3$/MWh for common energy fuel types
in the U.S. in Table~\ref{tab:water_carbon} \cite{Shaolei_Water_AI_Thirsty_arXiv_2023_li2023making,Water_EWIF_macknick2011review}, and the details of our 10 data center locations in Table~\ref{table:datacenter_details}.

\begin{table*}[!t]

\centering \caption{Estimated EWIF for common energy fuel types in the U.S. \cite{Water_EWIF_macknick2011review}.}
\begin{tabular}{c|c|c|c|c|c|c|c}
\toprule
 Fuel Type & Coal & Nuclear & Natural Gas & Solar (PV) & Wind & Other & Hydro \\
\hline EWIF (L/kWh) & 1.7 & 2.3 & 1.1 & 0 & 0 & 1.8 & 68 (0, if excluded)\\
\bottomrule
\end{tabular}
 \centering
\label{tab:water_carbon}
\end{table*}

\section{Additional Results for GLB with Partial Flexibility}\label{sec:app_glb_partial}

In this experiment, the goal is to evaluate the performance of \ouralg under partial GLB flexibility.
Intra-continental workload routing is fully
flexible but inter-continental workload routing is partially restricted.
Specifically, 
we only allow partial inter-continental workload routing 
 as follows: workloads can be flexibly routed
 between Asia and the western U.S. (Nevada),
 and between Europe and the eastern U.S. (Virginia and Georgia). 
Similar to the full GLB flexibility scenario, the
increase of the learning rate $\eta$ helps improve the environmental equity at the expense of increasing the energy cost.
The results are shown in Fig.~\ref{fig:partial_GLB_results}.

\section{Extension to Heterogeneous AI Models}\label{sec:heterogeneous_AI_models}

\begin{table*}[!t]
    %\scriptsize
    \footnotesize
    \caption{Comparison of different GLB algorithms in terms of energy cost, carbon and water footprint with heterogeneous AI models. 
    The results of \ouralg with
    the learning rate $\eta=10^{-5}$ are bolded.}   \label{table:heter_AI_models}
    \centering
    \begin{tabular}{c|c|c|c|c|c|c|c|c|c|c} 
    \toprule
    \textbf{GLB} & \multicolumn{2}{c|}{\multirow{2}{*}{\textbf{Metric}}} & \multicolumn{7}{c}{\textbf{Algorithm}} \\ 
    \cline{4-11}
       \textbf{Flexibility}    & \multicolumn{2}{c|}{}                        &  \costalg       &  \carbonalg     & \wateralg      & \costcarbonalg    & \allalg      & \distancealg         & {\ouralgoffline} & \textbf{\ouralg}\\ 
    \hline
    \multirow{7}{*}{Full} & \textbf{Energy} (US\$)                     & avg        & 20656 & 56891 & 58345 & 29047 & 26071 & 47038 & 29279 & \textbf{35034}\\ 
    \cline{2-11}
    & \multirow{3}{*}{\textbf{Water} ($\text{m}^3$)} & avg      &  1549.8 & 1311.9 & 1075.8 & 1476.7 & 1367.2 & 1446.7 & 1416.1 & \textbf{1424.6} \\ 
    \cline{3-11}
    &  & max   & 4521.5 & 4537.4 & 2788.5 & 3786.7 & 2640.2 & 2090.9 & 1612.7 & \textbf{1830.9} \\
    \cline{3-11}
    & & \textbf{max/avg} & 2.92 & 3.46 & 2.59 & 2.56 & 1.93 & 1.45 & 1.14 & \textbf{1.29}\\
    \cline{2-11}
    & \multirow{3}{*}{\textbf{Carbon} (ton)}   & avg  & 121.37 & 67.21 & 100.17 & 87.73 & 98.63 & 111.80 & 96.90 & \textbf{101.77}\\ 
    \cline{3-11}
    &  & max        & 352.84 & 171.07 & 375.92 & 233.49 & 201.94 & 143.46 & 116.59 & \textbf{129.94} \\  
    \cline{3-11}
    & & \textbf{max/avg} & 2.91 & 2.55 & 3.75 & 2.66 & 2.05 & 1.28 & 1.20 & \textbf{1.28} \\
    \hline
    \multirow{7}{*}{Full} & \textbf{Energy} (US\$)  & avg    &  21684 & 49516 & 54478 & 27329 & 26094 & 47038 & 30011 & \textbf{36170}\\ 
    \cline{2-11}
    & \multirow{3}{*}{\textbf{Water} ($\text{m}^3$)} & avg      &  1561.8 & 1411.2 & 1106.9 & 1503.3 & 1385.9 & 1446.7 & 1442.1 & \textbf{1437.6}  \\ 
    \cline{3-11}
    &  & max   & 4343.4 & 3699.6 & 2758.4 & 4106.1 & 3045.9 & 2090.9 & 1782.4 & \textbf{1885.4} \\
    \cline{3-11}
    & & \textbf{max/avg} & 2.78 & 2.62 & 2.49 & 2.73 & 2.20 & 1.45 & 1.24 & \textbf{1.31} \\
    \cline{2-11}
    & \multirow{3}{*}{\textbf{Carbon} (ton)}   & avg  & 119.50 & 79.67 & 103.45 & 96.26 & 103.17 & 111.80 & 99.46 & \textbf{103.33} \\ 
    \cline{3-11}
    &  & max        &  340.16 & 183.15 & 351.76 & 254.43 & 230.12 & 143.46 & 131.22 & \textbf{135.26} \\  
    \cline{3-11}
    & & \textbf{max/avg} &  2.85 & 2.30 & 3.40 & 2.64 & 2.23 & 1.28 & 1.32 & \textbf{1.31} \\
    \bottomrule
    \end{tabular}
\end{table*}

For the same inference service,
a set of heterogeneous AI models with
distinct computing resource consumption and accuracy performance may
be available via model pruning and compression in practice \cite{DNN_Compression_SongHan_ICLR_2016},
offering flexible energy-accuracy tradeoffs.
For example,
there are eight
different GPT-3 model sizes, ranging from the smallest one with
125 million parameters to the largest one with 175 billion parameters \cite{ML_GPT3_Energy_Others_NIPS_2020_NEURIPS2020_1457c0d6}. Now,
we extend our problem formulation to this generalized setting.

Suppose that there are a set $\mathcal{L}=\{1,\cdots,L\}$ of
heterogeneous AI models for our considered inference service. 
For time $t$, we can dynamically choose to run one or more AI models
to serve the incoming workloads. This
is also  equivalent to distributing
the  workload $\sum_{j\in\mathcal{J}}x_{i,j}(t)$ to
$L$ heterogeneous AI models within data center $i$.
We denote by $y_{i,l}(t)\geq0$ as the amount
of workloads distributed to AI model $l$ in data center $i$.
Naturally, $y_{i,l}(t)=0$ means that the AI model $l$ is not chosen
in data center $i$ at time $t$.

When deployed in data center $i$, the energy consumption
and server resource usage
of AI model $l$ for processing  workloads $y_{i,l}(t)$
are denoted by $e_{i,l}(y_{i,l}(t))$ and
$r_{i,l}(y_{i,l}(t))$, respectively. Thus,
the total server energy consumption in data center $i$ becomes
$\tilde{e}_i(y(t))=\sum_{l\in\mathcal{L}}e_{i,l}(y_{i,l}(t))$,
where $y(t)=\{y_{i,l}(t)\,|\,i\in\mathcal{N},l\in\mathcal{L}\}$ represents the collection of decisions
for workload assignment to different AI models.
Similarly, with heterogeneous AI models,
we can re-define the carbon footprint
and water footprint as $\tilde{c}_{i,t}(y(t))$
and $\tilde{w}_{i,t}(y(t))$ by replacing $e_i(x(t))$
with $\tilde{e}_i(y(t))=\sum_{l\in\mathcal{L}}e_{i,l}(y_{i,l}(t))$
in \eqref{eqn:carbon_footprint} and \eqref{eqn:water_footprint},
respectively.

To optimally distribute workloads to AI models with different  energy-accuracy tradeoffs,
we need to consider the inference \emph{cost} associated with different  accuracies, since otherwise always choosing the smallest model
can always result in the lowest energy consumption.
Specifically, we refer to the cost as performance cost
and denote it by $s_{l}(y_{i,l}(t))$, whose dependency on
$y_{i,l}(t)$ can be explained by noting that the
performance cost is potentially more significant when more users use
the model (i.e., $y_{i,l}(t)$ is larger).
Next, by combining the energy cost and performance cost,
we consider a generalized \emph{operational} cost as
follows:
\begin{equation}\label{eqn:operational_cost_generalized}
  \tilde{g}_t({y}(t))=
  \sum_{i\in\mathcal{N}} \sum_{l\in\mathcal{L}}\left[p_{i,t}\gamma_{i,t}\cdot e_{i,l}(y_{i,l}(t))+\phi\cdot s_{l}(y_{i,l}(t))\right],
\end{equation}
where
 the hyperparameter $\phi\geq0$ converts
the performance cost $s_{l}(y_{i,l}(t))$ to a monetary value and indicates
the importance of inference performance relative to the energy cost.

Finally, we formulate the generalized GLB problem with heterogeneous AI models
 as follows:
\begin{subequations}\label{eqn:objective_offline_heterogeneous}
 \begin{gather}\label{eqn:objective_heterogeneous}
 \begin{gathered}
        \min_{x(t),y(t)} \sum_{t=1}^T\tilde{g}_t(y(t))+\mu_c\cdot
       \max_{i\in\mathcal{N}}\left[\mathcal{H}_{i,c}\left(\sum_{t=1}^T\tilde{c}_{i,t}(y(t))\right)\right]\\+\mu_w\cdot
\max_{i\in\mathcal{N}}\left[\mathcal{H}_{i,w}\left(\sum_{t=1}^T\tilde{w}_{i,t}\left(y(t)\right)\right)\right],
\end{gathered}
       \\
       \label{eqn:constraint_delay_assignment_heterogeneous}
      s.t. \;\;\;\;\;\;    x_{i,j}(t)=0, \;\text{ if } B_{i,j}=0,\;\;\;\; \forall\; i\in{\mathcal{N}},
       j\in{\mathcal{J}}, t=1,\cdots,T,
          \\
       \label{eqn:constraint_gateway_heterogeneous}
         \sum_{x\in\mathcal{N}}x_{i,j}(t)= \lambda_{j,t}, \;\;\;\; \forall
       \; j\in\mathcal{J}, t=1,\cdots,T,\\
        \label{eqn:constraint_datacenter_capacity_heterogeneous}
      \sum_{k\in\mathcal{K}}r_{i,k}\left(y_{i,k}(t)\right)\leq M_i, \;\;\;\; \forall
       \; i\in\mathcal{N},t=1,\cdots,T,\\
       \label{eqn:constraint_datacenter_AI_model_heterogeneous}
         \sum_{j\in\mathcal{J}}x_{i,j}(t)= \sum_{k\in\mathcal{K}}y_{i,k}(t), \;\;\;\; \forall
       \; j\in\mathcal{J}, t=1,\cdots,T,
 \end{gather}
\end{subequations}
where the objective
\eqref{eqn:objective_heterogeneous} is to minimize
the generalized operational cost (including both energy
and performance costs) while addressing environmental inequity,
 the constraint \eqref{eqn:constraint_datacenter_capacity_heterogeneous}
means that the total resource demand must be no more than the server cluster's capacity, and the
last constraint \eqref{eqn:constraint_datacenter_AI_model_heterogeneous}
ensures that the workload assigned to each data center
is always served by
 one of the heterogeneous AI models.
The problem \eqref{eqn:objective_heterogeneous}--\eqref{eqn:constraint_datacenter_AI_model_heterogeneous}
can be solved by introducing auxiliary variables and
optimizing the Lagrangian with estimated dual variables.
The algorithm is  similar to Algorithm~\ref{alg:dmd_with_equity}.

Next, we conduct a trace-based simulation to evaluate the performance of different GLB algorithms with heterogeneous AI models. We consider 
a similar setup as the default one with homogeneous AI models, where 
we keep the weights $\mu_w$ and $\mu_c$ unchanged. 
As the open-sourced BLOOM has only one model size and is not suitable
for the heterogeneous AI model case,
we consider the Llama-2 model released by Meta with three different available model sizes (7B, 13B, and 70B), corresponding to different accuracies and energy demands \cite{AI_Llama2_Meta_arXiv_2023_touvron2023llama}. 
 We normalize the average inference accuracy and energy consumption by that of the largest model. We set the accuracy performance weight such that
 the average inference accuracy is roughly the same as
 that of the model with the medium size. As each performance weight
 corresponds to an average inference constraint,
 this is essentially equivalent
 to constraining
 the average inference accuracy (weighted by the amount of requests
 for each model) to be equal to that of the medium-size model.
 Each of the 10 geo-distributed data centers can handle a specified quantity of requests for each model size subject to the total capacity constraint. By using the same traces
for carbon intensity, water usage efficiency, and workloads as in the homogeneous case, 
we run different GLB algorithms and show the results Table~\ref{table:heter_AI_models}.

The results provide similar insights as in the homogeneous case for both full and partial GLB scenarios. 
Specifically, we see that compared to the other equity-oblivious GLB algorithms, \ouralg can effectively reduce
the environmental inequity among different regions in terms
of the maximum-to-average ratio for both carbon and water footprints.
Additionally, even compared to \ouralgoffline, \ouralg delivers a comparable performance
in terms of environmental equity while only slightly increasing
the total energy cost. Again, this demonstrates the potential of \ouralg 
to address AI's emerging environmental inequity.

\section{Proof of Theorem \ref{thm:cost_bound}}

When the reference function is $h(a) = \frac{1}{2}\| a \|^2$, the update rule in Line 7 in Algorithm \ref{alg:dmd_with_equity} can be rewritten as 
\begin{equation}\label{eqn:queue_update}
    \frac{1}{\eta}\kappa_{t+1} = \left[\frac{1}{\eta} \kappa_t -  d_t \right]^+ =  \left[\frac{1}{\eta}\kappa_t +  ( \begin{bmatrix}  \cC_t(x(t))  \\   \cW_t(x(t))  \end{bmatrix} - \begin{bmatrix} z_c(t)\\z_w(t)\end{bmatrix}
    )\right]^+
\end{equation}
where $[x]^+= x$ when $x$ is positive, otherwise it's set as zero. Given the dual variable $\kappa_t$, the optimization goal of Line 4 and Line 5 can be written as 
\begin{equation}\label{eqn:single_step_lag}
\begin{aligned}
    \min_{x(t), z_c(t), z_w(t) }\frac{1}{\eta} \biggl(& g_t(x(t)) +   \mu_c \| \cH_{c}({z}_{c}(t)) \|_{\infty}  \\
    &+ \mu_w \| \cH_{w}({z}_{w}(t)) \|_{\infty}\biggr) + \frac{\kappa_t^\top}{\eta} \cdot (-d_t)
\end{aligned}
\end{equation}
by multiplying Eqn~\eqref{eqn:single_step_lag} with $\eta$, it corresponds to the change of Lagrange function at time $t$. Now we define a new variable $\Delta_1(t) = \frac{1}{2 \eta^2}(\kappa_{t+1}^2 -\kappa_{t}^2) $ to quantify the change of Lagrange multiplier $\kappa_t$.  Our next step is to provide bounds on this dual variable $\kappa_t$, which is done by the following lemma.

\begin{lemma}\label{lemma:queue_length}
    If the reference function $h(a) = \frac{1}{2} \| a\|^2$ and $\kappa_{1} = 0$, then the dual variable $\kappa_t$ is bounded by
    \begin{equation}
        \|\kappa_{T+1}\| \leq \eta \sqrt{2T(B + \frac{M\theta_m}{\eta}( \mu_c c_m + \mu_w w_m))}
    \end{equation}
    where constant $B = \frac{N}{2}(\bar{z}_c^2 + \bar{z}_w^2)$.
\end{lemma}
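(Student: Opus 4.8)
The plan is to treat the (unscaled) dual variable $\kappa_t$ as a virtual queue and run a Lyapunov-drift argument on $\|\kappa_t\|^2$, which the quadratic reference function $h(a)=\frac12\|a\|^2$ makes available. First I would rewrite the mirror-descent step as the Euclidean projected update $\kappa_{t+1}=[\kappa_t-\eta d_t]^+$ (this is \eqref{eqn:queue_update} multiplied by $\eta$). Since projection onto $\RR^{2N}_{\geq 0}$ is nonexpansive and fixes $0$, squaring gives the one-step drift
\[
\|\kappa_{t+1}\|^2 \leq \|\kappa_t-\eta d_t\|^2 = \|\kappa_t\|^2 - 2\eta\langle\kappa_t,d_t\rangle + \eta^2\|d_t\|^2 .
\]
Telescoping over $t=1,\dots,T$ with $\kappa_1=0$ then reduces the lemma to controlling two sums,
\[
\|\kappa_{T+1}\|^2 \leq \eta^2\sum_{t=1}^T\|d_t\|^2 + 2\eta\sum_{t=1}^T\langle\kappa_t,-d_t\rangle .
\]

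For the first sum I would use the box feasible sets. Each coordinate of $z_c(t),z_w(t)$ lies in $[0,\bar z_{i,c}]$ (resp.\ $[0,\bar z_{i,w}]$) by definition of $\cZ_c,\cZ_w$, and each coordinate of $\cC_t(x(t)),\cW_t(x(t))$ lies in the same interval because $\bar z_{i,c},\bar z_{i,w}$ are chosen as the maximum possible per-time footprints; hence every entry of $d_t$ is bounded in magnitude by the corresponding $\bar z_{i,c}$ or $\bar z_{i,w}$, giving $\|d_t\|^2 \leq N\bar z_c^2 + N\bar z_w^2 = 2B$ and therefore $\eta^2\sum_t\|d_t\|^2 \leq 2\eta^2 B T$.

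The crux is the second sum, and the key idea is to invoke the optimality of the auxiliary variable $z(t)$ in Line~5 against the \emph{specific} competitor $(\cC_t(x(t)),\cW_t(x(t)))$, which is feasible for that subproblem precisely because its coordinates lie in $\cZ_c\times\cZ_w$. Substituting this competitor into the optimality inequality and rearranging, the $\kappa_t$-linear terms combine into $\langle\kappa_t,-d_t\rangle$ and leave only a penalty in the $\ell_\infty$ terms:
\[
\begin{aligned}
\langle\kappa_t,-d_t\rangle \leq{}& \mu_c\bigl(\|\cH_c(\cC_t(x(t)))\|_\infty-\|\cH_c(z_c(t))\|_\infty\bigr)\\
&+\mu_w\bigl(\|\cH_w(\cW_t(x(t)))\|_\infty-\|\cH_w(z_w(t))\|_\infty\bigr).
\end{aligned}
\]
I would then bound each difference of $\ell_\infty$ norms by the maximal coordinatewise difference $\max_i|\cH_{i,c}(c_{i,t}(x(t)))-\cH_{i,c}(z_{i,c}(t))|$ (and similarly for water), apply the $\theta_m$-Lipschitz bound on $\cH_{i,c},\cH_{i,w}$, and finally bound the footprints themselves by $Mc_m$ and $Mw_m$ (a per-time footprint is at most the capacity $M$ times the footprint gradients $c_m,w_m$). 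This yields the uniform estimate $\langle\kappa_t,-d_t\rangle \leq M\theta_m(\mu_c c_m+\mu_w w_m)$, hence $2\eta\sum_t\langle\kappa_t,-d_t\rangle \leq 2\eta TM\theta_m(\mu_c c_m+\mu_w w_m)$.

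Combining the two bounds yields
\[
\|\kappa_{T+1}\|^2 \leq 2\eta^2 B T + 2\eta TM\theta_m(\mu_c c_m+\mu_w w_m) = 2\eta^2 T\Bigl(B+\tfrac{M\theta_m}{\eta}(\mu_c c_m+\mu_w w_m)\Bigr),
\]
and taking square roots gives the claim. The main obstacle is exactly this second sum: a direct bound on $\langle\kappa_t,-d_t\rangle$ would involve the a~priori unbounded $\kappa_t$ and appears circular, and the whole argument hinges on noticing that the realized footprints $(\cC_t(x(t)),\cW_t(x(t)))$ are themselves an admissible auxiliary vector, which cancels the dependence on $\kappa_t$ and replaces it by a Lipschitz term controlled by $\theta_m$. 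The remaining steps — the telescoping and the box-constraint bookkeeping — are routine.
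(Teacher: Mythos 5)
Your proposal is correct and matches the paper's proof in all essentials: the same quadratic Lyapunov drift on $\|\kappa_t\|^2$ via non-expansiveness of the projection, the same bound $\|d_t\|^2\leq 2B$ from the box constraints, and the same key step of playing the realized footprints $(\cC_t(x(t)),\cW_t(x(t)))$ as a feasible competitor in the auxiliary-variable subproblem to cancel the $\kappa_t$-dependence, followed by the $M\theta_m(\mu_c c_m+\mu_w w_m)$ Lipschitz bound. The only cosmetic difference is that the paper phrases the competitor argument through the joint single-step Lagrangian (carrying the $g_t$ terms along and cancelling them), whereas you invoke the separability of Line~5 directly — the same argument.
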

\begin{proof}
     From Eqn~\eqref{eqn:queue_update}, we have the following inequality
    \begin{align}
        \Delta_1(t) &\leq  \frac{1}{2\eta^2}\left((\kappa_t - \eta d_t)^2 -  \kappa_t^2\right)\\
        &= \frac{1}{\eta}\kappa_t\cdot (- d_t) + \frac{1}{2}d_t^2\\
        &\leq \frac{1}{\eta}\kappa_t\cdot (- d_t)  +  B
    \end{align}
    where $B = \frac{N}{2} \cdot (\bar{z}_{c}^2 + \bar{z}_{w}^2)$, the first inequality is based on $([x]^+)^2 \leq x^2$ and the second inequality comes from our assumption that $\bar{z}_{c}$ and $\bar{z}_{w}$ are the largest possible values of the carbon and water footprint. 
    
    Suppose at time $t$, the optimal solution for Eqn~\eqref{eqn:single_step_lag} is $x(t)^\dagger$,\\$ {z}_{c}(t)^\dagger, z_w(t)^\dagger$,
    for any other $ {z}_{c}(t)' \in \cZ_c, z_w(t)' \in \cZ_w $  we have
    \begin{equation}\label{eqn:single_step_ppd}
        \begin{aligned}
        &\Delta_1(t) + \frac{1}{\eta} \left(g_t(x(t)^\dagger) +   \mu_c\| \cH_{c}({z}_{c}(t)^\dagger) \|_{\infty} + \mu_w \| \cH_{w}({z}_{w}(t)^\dagger) \|_{\infty}\right)\\
        \leq & B+ \frac{1}{\eta} \left(g_t(x(t)^\dagger) +   \mu_c \| \cH_{c}(z_c(t)') \|_{\infty} + \mu_w \| \cH_{w}(z_w(t)') \|_{\infty}\right) \\
        &+ \frac{\kappa_t^\top}{\eta} \cdot (\begin{bmatrix}  \cC_t(x(t)^\dagger)  \\   \cW_t(x(t)^\dagger)  \end{bmatrix} - \begin{bmatrix} z_c(t)'\\z_w(t)'\end{bmatrix} )
    \end{aligned}
    \end{equation}
In the second inequality, we choose $z_c(t)'$ and $z_w(t)'$ such that the term $\begin{bmatrix}  \cC_t(x(t)^\dagger)  \\   \cW_t(x(t)^\dagger)  \end{bmatrix} - \begin{bmatrix} z_c(t)'\\z_w(t)'\end{bmatrix} = 0$ , then we have
    \begin{align}
         &\begin{aligned}
             \Delta_1(t) \leq &B+ \frac{\mu_c}{\eta} \left( \| \cH_{c}(z_c(t)') \|_{\infty} -  \| \cH_{c}({z}_{c}(t)^\dagger) \|_{\infty}\right) + \\
            & \frac{\mu_w}{\eta}\left( \| \cH_{w}(z_w(t)') \|_{\infty}  -\| \cH_{w}({z}_{w}(t)^\dagger) \|_{\infty}\right)
            \end{aligned} \\
        & \quad \quad \leq B + \frac{M\theta_m}{\eta}( \mu_c c_m + \mu_w w_m)
    \end{align}
where the second inequality results from the assumption of maximum carbon or water price, the maximum datacenter capacity $M$ and $\theta_m$, the maximum gradient of function $\cH_w(\cdot)$ and $\cH_c(\cdot)$.
By summing up $\Delta_1(t)$ through $t=1$ to $T$, then we have
\begin{equation}
    \frac{1}{2\eta^2}(\kappa_{T+1}^2 - \kappa_{1}^2) \leq T(B + \frac{M\theta_m}{\eta}( \mu_c c_m + \mu_w w_m))
\end{equation}
\end{proof}

Using the previous result, we can now proceed by proving the following technical lemma, which when combined with the analysis above will let us complete the proof.

\begin{lemma}
     Suppose the optimal solution for Eqn~\eqref{eqn:single_step_lag} is $x_{1:T}^\dagger$, $z_{c,1:T}^\dagger$ and $z_{w,1:T}^\dagger$,  for any ${x(t)', z_c(t)', z_w(t)'}$ satisfying the constraints in Eqn~\eqref{eqn:action_single_step_constraint} -- \eqref{eqn:constraint_gateway_dual_2} we have
    \begin{equation}\label{eqn:total_cost_ineq}
        \begin{aligned}
            &\sum_{t=1}^T \Delta_{1}(t) + \frac{1}{\eta} \sum_{t=1}^T \biggl[g_t(x(t)^\dagger) +  \mu_c \| \cH_{c}(z_c(t)^\dagger) \|_{\infty}  \\
            &  + \mu_w \| \cH_{w}(z_w(t)^\dagger) \|_{\infty} \biggr]\\
            \leq & B T + B T (T-1)  +  \frac{\kappa_1}{\eta} \sum_{t=1}^T \biggl( \begin{bmatrix}  \cC_t(x(t)'   \\ \cW_t(x(t)')  \end{bmatrix} - \begin{bmatrix}z_{c}(t)'\\ z_{w}(t)'\end{bmatrix} \biggr)\\
            &+ \frac{1}{\eta}\sum_{t=1}^T \bigl[g_t(x(t)') +  \mu_c \| \cH_{c}(z_c(t)') \|_{\infty} + \mu_w \| \cH_{w}(z_w(t)') \|_{\infty} \bigr]
        \end{aligned}
    \end{equation}
\end{lemma}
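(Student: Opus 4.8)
The plan is to derive a single-step comparison inequality, sum it over the horizon, and then bound the accumulated drift of the dual iterates. Throughout, write $F_t(x,z_c,z_w)=g_t(x)+\mu_c\|\cH_c(z_c)\|_{\infty}+\mu_w\|\cH_w(z_w)\|_{\infty}$ for the instantaneous objective in \eqref{eqn:single_step_lag}, let $F_t^{\dagger}$ and $F_t'$ denote its value at the dagger point and at the comparator, and recall $d_t=[z_c(t)^\top,z_w(t)^\top]^\top-[\cC_t(x(t))^\top,\cW_t(x(t))^\top]^\top$, with $d_t'$ its analog evaluated at $(x(t)',z_c(t)',z_w(t)')$.

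First I would reuse the per-step estimate already established inside the proof of Lemma~\ref{lemma:queue_length}, namely $\Delta_1(t)\le\frac{1}{\eta}\kappa_t^\top(-d_t)+B$, evaluated at the algorithm's choices $x(t)^{\dagger},z_c(t)^{\dagger},z_w(t)^{\dagger}$, which minimize \eqref{eqn:single_step_lag} over $\cX_t\times\cZ_c\times\cZ_w$. Since $(x(t)',z_c(t)',z_w(t)')$ satisfies \eqref{eqn:action_single_step_constraint} and is therefore feasible for that single-step problem, optimality of the dagger point gives $\frac{1}{\eta}F_t^{\dagger}+\frac{\kappa_t^\top}{\eta}(-d_t)\le\frac{1}{\eta}F_t'+\frac{\kappa_t^\top}{\eta}(-d_t')$. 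Combining the two produces the per-step inequality
\[ \Delta_1(t)+\tfrac{1}{\eta}F_t^{\dagger}\le B+\tfrac{1}{\eta}F_t'+\tfrac{\kappa_t^\top}{\eta}(-d_t'). \]

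Second I would sum over $t=1,\dots,T$. The constant yields $BT$, the $F_t$ sums reproduce the matching cost terms of \eqref{eqn:total_cost_ineq}, and the residual $\frac{1}{\eta}\sum_t\kappa_t^\top(-d_t')$ I would split as $\frac{\kappa_1^\top}{\eta}\sum_t(-d_t')+\frac{1}{\eta}\sum_t(\kappa_t-\kappa_1)^\top(-d_t')$. The first piece is exactly the $\frac{\kappa_1}{\eta}\sum_t(-d_t')$ term on the right of \eqref{eqn:total_cost_ineq}, so the claim reduces to showing $\frac{1}{\eta}\sum_t(\kappa_t-\kappa_1)^\top(-d_t')\le BT(T-1)$. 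For this (the crux), I would exploit that the update \eqref{eqn:queue_update} is a Euclidean projection onto $\RR^{2N}_{\ge0}$, which is nonexpansive; since $\kappa_s\ge0$ this gives $\|\kappa_{s+1}-\kappa_s\|\le\eta\|d_s\|$. Because the auxiliary variables and the footprints both lie componentwise in $[0,\bar z]$, one has $\|d_s\|^2\le N\max_i\bar z_{i,c}^2+N\max_i\bar z_{i,w}^2=2B$, and the same bound holds for $\|{-d_t'}\|$. Hence $\|\kappa_t-\kappa_1\|\le\eta(t-1)\sqrt{2B}$ by the triangle inequality, Cauchy--Schwarz yields $\frac{1}{\eta}(\kappa_t-\kappa_1)^\top(-d_t')\le 2B(t-1)$, and $\sum_{t=1}^T 2B(t-1)=BT(T-1)$. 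Collecting the three contributions establishes \eqref{eqn:total_cost_ineq}.

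The main obstacle is precisely this drift term. Unlike the argument inside Lemma~\ref{lemma:queue_length}, where a cleverly chosen comparator made the slack vanish, here the comparator need only satisfy the long-horizon constraints \eqref{eqn:constraint_gateway_dual_1}--\eqref{eqn:constraint_gateway_dual_2}, so $-d_t'$ cannot be forced to zero at each step and $\kappa_t^\top(-d_t')$ cannot be cancelled pointwise. The quadratic penalty $BT(T-1)$ is the price of this mismatch, and it arises naturally from the at-most-linear-in-$t$ growth of the nonexpansive dual trajectory once summed over the horizon.
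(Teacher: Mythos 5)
Your proposal is correct and follows essentially the same route as the paper's proof: the per-step inequality from optimality of the dagger point combined with the drift bound $\Delta_1(t)\le\frac{1}{\eta}\kappa_t^\top(-d_t)+B$, summation over $t$, and a bound of $2B(t-1)$ on the per-step dual-drift term $\frac{1}{\eta}(\kappa_t-\kappa_1)^\top(-d_t')$ yielding the $BT(T-1)$ penalty. Your argument is in fact slightly more explicit than the paper's, since you justify the dual-trajectory growth via nonexpansiveness of the projection onto $\RR^{2N}_{\ge 0}$ together with Cauchy--Schwarz, a step the paper asserts directly from the update rule.
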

\begin{proof}
Similar to Eqn~\eqref{eqn:single_step_ppd}, for any other $x(t) \in \mathcal{X}_t$, $ {z}_{c}(t)' \in \cZ_c, z_w(t)' \in \cZ_w $  we have
    \begin{equation}\label{eqn:single_step_ppd_2}
        \begin{aligned}
        &\Delta_1(t) + \frac{1}{\eta} \left(g_t(x(t)^\dagger) +   \mu_c\| \cH_{c}({z}_{c}(t)^\dagger) \|_{\infty} + \mu_w \| \cH_{w}({z}_{w}(t)^\dagger) \|_{\infty}\right)\\
        \leq & B+ \frac{1}{\eta} \left(g_t(x(t)') +   \mu_c \| \cH_{c}(z_c(t)') \|_{\infty} + \mu_w \| \cH_{w}(z_w(t)') \|_{\infty}\right) \\
        &+ \frac{\kappa_t^\top}{\eta} \cdot (\begin{bmatrix}  \cC_t(x(t)')  \\   \cW_t(x(t)')  \end{bmatrix} - \begin{bmatrix} z_c(t)'\\z_w(t)'\end{bmatrix} )
    \end{aligned}
    \end{equation}  
Now we define the subgradient of the action $x(t)'$ as $d_t' = \begin{bmatrix}  \cC_t(x(t)')  \\   \cW_t(x(t)')  \end{bmatrix} - \begin{bmatrix} z_c(t)'\\z_w(t)'\end{bmatrix}$. According to the update rule of $\kappa_t$, the maximum difference between dual variables at $t=1$ and $t = \tau +1$ is bounded by
\begin{equation}
   -\tau N (\bar{z}_c^2 + \bar{z}_w^2) \leq  \langle \frac{\kappa_{\tau+1}}{\eta} - \frac{\kappa_1}{\eta}, d_t' \rangle \leq \tau N (\bar{z}_c^2 + \bar{z}_w^2)
\end{equation} 
Therefore, for all $t \in [1,T]$, we have
\begin{equation}
\begin{aligned}
    \frac{\kappa_t^\top}{\eta} \cdot (-d_t') &\leq  \frac{\kappa_1^\top}{\eta} \cdot (-d_t') + N(t-1) (\bar{z}_c^2 + \bar{z}_w^2)  
\end{aligned}
\end{equation}
By summing up the inequality, we have
\begin{equation}
\begin{aligned}
    \sum_{t=1}^{T}\frac{\kappa_t^\top}{\eta} \cdot (-d_t') &\leq \frac{\kappa_1^\top}{\eta} \left( \sum_{t=1}^T (-d_t')\right) + N (\bar{z}_c^2 + \bar{z}_w^2)  \sum_{t=1}^T t-1\\
    &= \frac{\kappa_1^\top}{\eta} \left( \sum_{t=1}^T (-d_t')\right) + {T(T-1)} \frac{N}{2} (\bar{z}_c^2 + \bar{z}_w^2)
\end{aligned}
\end{equation}
By summing up Eqn~\eqref{eqn:single_step_ppd_2} through $t=1$ to $T$, we finish the proof.
\end{proof}

We are now ready to complete the proof.  Note that  $\sum_{t=1}^T \Delta_{1}(t) \geq 0$. Suppose $x_{1:T}^*$ is the optimal solution to the Eqn~\eqref{eqn:objective_offline}, which also satisfies the constraints in Eqn~\eqref{eqn:action_single_step_constraint} -- \eqref{eqn:constraint_gateway_dual_2}. Substituting  $x_{1:T}^*$ back to Eqn \eqref{eqn:total_cost_ineq} gives
\begin{equation}\label{eqn:pure_objective_difference}
\begin{aligned}
    &\frac{1}{T} \sum_{\tau=t_0}^T \Bigl[g_t(x(t)^\dagger) +  \mu_c \| \cH_{c}(z_c(t)^\dagger) \|_{\infty} + \mu_w \| \cH_{w}(z_w(t)^\dagger) \|_{\infty}  \Bigr]\\
    \leq &\eta (B + B(T-1)) + \frac{1}{T}\sum_{\tau=t_0}^T \Bigl[g_t(x(t)^*) +  \mu_c \| \cH_{c}(z_c(t)^*) \|_{\infty} \\
    &+ \mu_w \| \cH_{w}(z_w(t)^*) \|_{\infty} \Bigr]
\end{aligned}
\end{equation}
where the $\sum_{t=1}^T \begin{bmatrix} z_c(t)^*\\z_w(t)^*\end{bmatrix}= \sum_{t=1}^T\begin{bmatrix}  \cC_t(x(t)^*)  \\   \cW_t(x(t)^*)  \end{bmatrix}$, so the term $\kappa_1 \sum_{t=1}^T d_t$ is equal to zero. 

The left hand side of Eqn~\eqref{eqn:pure_objective_difference} is mixed up with GLB decisions $x(t)^\dagger$ and auxiliary variables $z_{c}(t)^\dagger$, $z_{w}(t)^\dagger$. The next step is to eliminate these auxiliary variables by bounding their difference. Based on the update rule of $\kappa_t$ and Lemma \ref{lemma:queue_length}, we have
\begin{equation}
\begin{aligned}
     &\frac{1}{T}\| \sum_{t=1}^T \left(\begin{bmatrix}  \cC_t(x(t)')  \\ \cW_t(x(t)')  \end{bmatrix} - \begin{bmatrix}z_{c}(t)'\\ z_{w}(t)'\end{bmatrix}\right)\|\\
     \leq &\frac{1}{T} \left(\|\frac{\kappa_{T+1}}{\eta}\| - \| \frac{\kappa_1}{\eta} \| \right) \\  \leq&  \sqrt{\frac{2}{T}(B + \frac{M\theta_m}{\eta}( \mu_c c_m + \mu_w w_m))}
\end{aligned}
\end{equation}

The maximum gradient of $\mu_c \| \cH_{c}(z_c(t)^\dagger) \|_{\infty} + \mu_w \| \cH_{w}(z_w(t)^\dagger) \|_{\infty}$ with respect to $[z_c(t), z_w(t)]$, is always bounded by $C = \theta_m(\mu_c + \mu_w)$. For simplicity, we define $D = \theta_m(\mu_c c_m + \mu_w w_m)$, then we have

\begin{equation}\label{eqn:cost_difference_dual}
    \begin{aligned}
    &\mu_c \| \cH_{c}(\frac{1}{T} \sum_{\tau=t_0}^T \cC_t(x(t)^\dagger)) \|_{\infty} + \mu_w \| \cH_{w}(\frac{1}{T} \sum_{\tau=t_0}^T\cC_t(x(t)^\dagger)) \|_{\infty} \\
    \leq&  \mu_c \| \cH_{c}(\frac{1}{T} \sum_{\tau=t_0}^T z_c(t)^\dagger) \|_{\infty} + \mu_w \| \cH_{w}(\frac{1}{T} \sum_{\tau=t_0}^Tz_w(t)^\dagger) \|_{\infty} \\
    & + \Bigl[ \theta_m(\mu_c + \mu_w)\Bigr] \sqrt{\frac{2}{T}(B + \frac{M}{\eta}D)}\\
    \leq &\frac{1}{T} \sum_{\tau=t_0}^T \biggl[\mu_c \| \cH_{c}(z_c(t)^\dagger) \|_{\infty}  + \mu_w \| \cH_{w}(z_w(t)^\dagger) \|_{\infty}\biggr]\\
    &+ \Bigl[ \theta_m(\mu_c + \mu_w)\Bigr] \sqrt{\frac{2}{T}(B + \frac{M}{\eta}D)}
\end{aligned}
\end{equation}
where the first inequality is based on the maximum gradient $D$, the second inequality is from Jensen's inequality. By substituting Eqn~\eqref{eqn:cost_difference_dual} back to Eqn~\eqref{eqn:pure_objective_difference}, we recover the cost objective in Eqn~\eqref{eqn:objective} and finish the proof.

\end{document}